\def\eqref#1{equation~\ref{#1}}
\def\1{\bm{1}}
\DeclareMathAlphabet{\mathsfit}{\encodingdefault}{\sfdefault}{m}{sl}
\SetMathAlphabet{\mathsfit}{bold}{\encodingdefault}{\sfdefault}{bx}{n}
\DeclareMathOperator*{\argmin}{arg\,min}
\newcommand\DoToC{%
  \startcontents
  \printcontents{}{1}{\textbf{Contents}\vskip3pt\hrule\vskip5pt}
  \vskip3pt\hrule\vskip5pt
}
\newtheorem{thm}{Theorem}[section]
\newtheorem{lemma}{Lemma}[section]
\title{CAMBranch: Contrastive Learning with \\ Augmented MILPs for Branching}
\author{%
  Jiacheng Lin$^{1,\dag}$\thanks{This work was completed during the master program at Tsinghua University.}\qquad
  Meng Xu$^{2}$\thanks{Equal contributions.}\qquad\space\space\space
  Zhihua Xiong$^{2}$\qquad\space\space\space
  Huangang Wang$^{2}$\thanks{Corresponding to \texttt{hgwang@tsinghua.edu.cn}.}  
  \AND
  $^{1}$\normalfont{University of Illinois Urbana-Champaign}\quad
  $^{2}$\normalfont{Tsinghua University}
}
\begin{document}

\maketitle

\begin{abstract}
Recent advancements have introduced machine learning frameworks to enhance the Branch and Bound (B\&B) branching policies for solving Mixed Integer Linear Programming (MILP). These methods, primarily relying on imitation learning of Strong Branching, have shown superior performance. However, collecting expert samples for imitation learning, particularly for Strong Branching, is a time-consuming endeavor. To address this challenge, we propose \textbf{C}ontrastive Learning with \textbf{A}ugmented \textbf{M}ILPs for \textbf{Branch}ing (CAMBranch), a framework that generates Augmented MILPs (AMILPs) by applying variable shifting to limited expert data from their original MILPs. This approach enables the acquisition of a considerable number of labeled expert samples. CAMBranch leverages both MILPs and AMILPs for imitation learning and employs contrastive learning to enhance the model's ability to capture MILP features, thereby improving the quality of branching decisions. Experimental results demonstrate that CAMBranch, trained with only 10\% of the complete dataset, exhibits superior performance. Ablation studies further validate the effectiveness of our method.

\end{abstract}

\section{Introduction}

Mixed Integer Linear Programming (MILP) is a versatile tool for solving combinatorial optimization problems, with applications across various fields \citep{bao2017cyclic, soylu2006synergy, godart2018milp, DBLP:journals/osn/AlmeidaCOS06, hait2011hybrid}. A prominent approach for solving MILPs is the Branch-and-Bound (B\&B) algorithm \citep{land1960automatic}. This algorithm adopts a divide-and-conquer approach, iteratively resolving sub-problems and progressively reducing the search space. Within the execution of the algorithm, one pivotal decision comes to the fore: variable selection, also known as branching. Traditionally, variable selection relies heavily on expert-crafted rules rooted in substantial domain knowledge. However, recent developments have seen a shift of focus towards the integration of machine learning based frameworks, aiming to enhance the B\&B algorithm by replacing conventional, hand-coded heuristics \citep{gasse2019exact, 2020Parameterizing, DBLP:journals/corr/abs-2012-13349, DBLP:journals/kbs/LinZWZ22}. This transition marks a notable advancement in the field, leveraging the power of machine learning and data-driven approaches to tackle complex problems more effectively. For a comprehensive overview of the notable developments in this emerging field, refer to the survey provided in \citet{DBLP:journals/eor/BengioLP21}.

The performance of the B\&B algorithm hinges significantly upon its branching strategy, and suboptimal branching decisions can exponentially escalate the computational workload. This predicament attracts researchers to explore the integration of machine learning (ML) techniques to enhance branching strategies. Notably, \citet{gasse2019exact} have trained branching policy models using imitation learning, specifically targeting Strong Branching \citep{applegate1995finding}, a traditional strategy known for generating minimal branching search trees but with extremely low efficiency. By mapping a MILP into a bipartite, these branching policy models leverage Graph Convolution Neural Networks (GCNN) \citep{DBLP:conf/iclr/KipfW17} to extract variable features and make variable selection decisions. This approach has demonstrated superior performance in solving MILPs, marking a significant milestone in the application of machine learning to MILP solving.

Despite making significant progress, a significant challenge arises with the imitation learning paradigm mentioned above. The collection of expert samples for imitation learning requires solving numerous MILP instances using Strong Branching, which is computationally intensive and time-consuming. From our experiments, collecting 100k expert samples for four combinatorial optimization problems (\textit{Easy} level) evaluated in \citep{gasse2019exact}, namely the Set Covering Problem \citep{balas1980set}, Combinatorial Auction Problem \citep{DBLP:conf/sigecom/Leyton-BrownPS00}, Capacitated Facility Location Problem \citep{1991A}, and Maximum Independent Set Problem \citep{Cire2015Decision}, took a substantial amount of time: 26.65 hours, 12.48 hours, 84.79 hours, and 53.45 hours, respectively. These results underscore the considerable effort and resources required for collecting a sufficient number of expert policy samples even on the \textit{Easy} level instances. Importantly, as the complexity of MILPs scales up, the challenge of collecting an adequate number of samples for imitation learning within a reasonable timeframe becomes increasingly impractical.

To address this issue, we present a novel framework named \textbf{C}ontrastive Learning with \textbf{A}ugmented \textbf{M}ILPs for \textbf{Branch}ing (CAMBranch). Our approach begins with the development of a data augmentation technique for MILPs.  This technique generates a set of Augmented MILPs (AMILPs) through variable shifting, wherein random shifts are applied to each variable within a MILP to produce a new instance. This augmentation strategy enables the acquisition of a substantial number of labeled expert samples, even when expert data is limited. It eliminates the need for extensive computational efforts associated with solving numerous MILP instances, thereby mitigating the challenges related to expert strategy sample collection. Next, building upon the work of \citet{gasse2019exact}, we transform a MILP into a bipartite graph. By providing theoretical foundations and proofs, we establish a clear correspondence between an augmented bipartite graph (derived from an AMILP) and its corresponding original bipartite graph. These bipartite graph representations are then fed into Graph Convolutional Neural Networks (GCNNs) to extract essential features and make branching decisions. Finally, we employ contrastive learning between MILPs and corresponding AMILPs to facilitate policy network imitation learning. This choice is motivated by the fact that MILPs and their AMILP counterparts share identical branching decisions, enabling a seamless integration of this learning approach. We evaluate our approach on four classical NP-hard combinatorial optimization problems, following the experimental setup described in  \citet{gasse2019exact}. The experimental results demonstrate the superior performance of our proposed CAMBranch, even if CAMBranch leverages only 10\% of the data used in \citet{gasse2019exact}.

\section{Preliminaries}


\subsection{Mixed Integer Linear  Programming (MILP)}
The general definition form of a MILP problem instance $\operatorname{MILP}=(\boldsymbol{c}, \boldsymbol{A}, \boldsymbol{b}, \boldsymbol{l}, \boldsymbol{u}, \boldsymbol{\mathcal{I}})$ is shown below 
\begin{align}
    \label{eq:mip} \min_{\boldsymbol{x}} \hspace{0.1cm} \boldsymbol{c}^{\mathrm{T}} \boldsymbol{x} ~~~~\text { s.t. } \boldsymbol{A} \boldsymbol{x} \leqslant \boldsymbol{b}, \hspace{0.1cm} \boldsymbol{l} \leqslant \boldsymbol{x} \leqslant \boldsymbol{u}, \hspace{0.1cm} x_j \in \mathbb{Z}, \hspace{0.1cm} \forall j \in \boldsymbol{\mathcal{I}}
\end{align}
where $\boldsymbol{A} \in \mathbb{R}^{m \times n}$ is the constraint coefficient matrix in the constraint, $\boldsymbol{c} \in \mathbb{R}^n$ is the objective function coefficient vector, $\boldsymbol{b} \in \mathbb{R}^m$ represents the constraint right-hand side vector, while $\boldsymbol{l} \in (\mathbb{R} \cup \{-\infty\})^n$ and $\boldsymbol{u} \in (\mathbb{R} \cup \{+\infty\})^n$ represent the lower and upper bound vectors for each variable, respectively. The set $\boldsymbol{\mathcal{I}}$ is an integer set containing the indices of all integer variables.

In the realm of solving MILPs, the B\&B algorithm serves as the cornerstone of contemporary optimization solvers. Within the framework of the B\&B algorithm, the process involves branching, which entails a systematic division of the feasible solution space into progressively smaller subsets. Simultaneously, bounding occurs, which aims to establish either lower-bound or upper-bound targets for solutions within these subsets. Lower bounds are calculated through linear programming (LP) relaxations, while upper bounds are derived from feasible solutions to MILPs.

\begin{table}[ht]
\centering
\caption{An overview of the features for constraints, edges, and variables in the bipartite graph $\mathbf{s}_t=(\mathcal{G}, \mathbf{C}, \mathbf{E}, \mathbf{V})$ following \citet{gasse2019exact}.}
\label{tab:table1}
\resizebox{\textwidth}{!}{
\begin{tabular}{clp{0.9\columnwidth}} 
\toprule
Type & \multicolumn{1}{c}{ Feature } & \multicolumn{1}{c}{ Description } \\
\midrule & obj\_cos\_sim & Cosine similarity between constraint coefficients and objective function coefficients. \\
\cmidrule { 2 - 3 } $\mathbf{C}$ & bias & Normalized right deviation term  using constraint coefficients. \\
\cmidrule { 2 - 3 } & is\_tight & Indicator of tightness in the linear programming (LP) solution. \\
\cmidrule { 2 - 3 } & dualsol\_val & Normalized value of the dual solution. \\
\cmidrule { 2 - 3 } & age &  LP age, which refers to the number of solver iterations performed on the LP relaxation problem without finding a new integer solution.\\
\midrule $\mathbf{E}$ & coef & Normalized constraint coefficient for each constraint. \\
\midrule & type & One-hot encoding representing the type (binary variables, integer variables, implicit integer variables, and continuous variables).\\
\cmidrule { 2 - 3 } & coef & Normalized objective function coefficients. \\
\cmidrule { 2 - 3 } & has\_lb /\_ub & Indicator for the lower/upper bound. \\
\cmidrule { 2 - 3 } $\mathbf{V}$ & sol\_is\_at\_lb /\_ub & The lower/upper bound is equal to the solution value. \\
\cmidrule { 2 - 3 } & sol\_frac & Fractionality of the solution value.\\
\cmidrule { 2 - 3 } & basis\_status & The state of variables in the simplex base is encoded using one-hot encodin (lower, basic, upper, zero). \\
\cmidrule { 2 - 3 } & reduced\_cost & Normalized reduced cost. \\
\cmidrule { 2 - 3 } & age & Normalized LP age. \\
\cmidrule { 2 - 3 } & sol\_val & Value of the solution. \\
\cmidrule { 2 - 3 } & inc\_val /avg\_inc\_val & Value/Average value in the incumbent solutions. \\
\bottomrule
\end{tabular}
}
\vspace{-1em}
\end{table}

\subsection{MILP Bipartite Graph Encoding}
Following \citet{gasse2019exact}, we model the MILP corresponding to each node in the B\&B tree with a bipartite graph, denoted by $(\mathcal{G}, \mathbf{C}, \mathbf{E}, \mathbf{V})$, where:
\noindent
(1) $\mathcal{G}$ represents the structure of the bipartite graph, that is, if the variable $i$ exists in the constraint $j$, then an edge $(i, j) \in \mathcal{E}$ is connected between node $i$ and $j$ within the bipartite graph. $\mathcal{E}$ represents the set of edges within the bipartite graph.
\noindent
(2) $\mathbf{C} \in \mathbb{R}^{|\mathbf{C}| \times d_1}$ stands for the features of the constraint nodes, with $|\mathbf{C}|$ denoting the number of constraint nodes, and $d_1$ representing the dimension of their features.
\noindent
(3) $\mathbf{V} \in \mathbb{R}^{|\mathbf{V}| \times d_2}$ refers to the features of the variable nodes, with $|\mathbf{V}|$ as the count of variable nodes, and $d_2$ as the dimension of their features.
\noindent
(4) $\mathbf{E} \in \mathbb{R}^{|\mathbf{C}| \times |\mathbf{V}| \times d_3}$ represents the features of the edges, with $d_3$ denoting the dimension of edge features. Details regarding these features can be found in Table \ref{tab:table1}.

Next, the input $\mathbf{s}_t=(\mathcal{G}, \mathbf{C} , \mathbf{E}, \mathbf{V})$ is then fed into GCNN which includes a bipartite graph convolutional layer. The bipartite graph convolution process involves information propagation from variable nodes to constraint nodes, and the constraint node features are updated by combining with variable node features. Similarly, the variable node updates its features by combining with constraint node features.
For $\forall i \in \mathcal{C}, j \in \mathcal{V}$, the process of message passing can be represented as
\begin{align}
   \label{eq:bipartite_mp} \mathbf{c}_i' = \mathbf{f}_{\mathcal{C}} \left(\mathbf{c}_i, \sum_j^{(i,j)\in \mathcal{E}} \mathbf{g}_{\mathcal{C}}(\mathbf{c}_i, \mathbf{v}_j, \mathbf{e}_{i, j})\right)~~~~~~\mathbf{v}_j' = \mathbf{f}_{\mathcal{V}}\left(\mathbf{v}_j, \sum_{i}^{(i,j)\in \mathcal{E}} \mathbf{g}_{\mathcal{V}}(\mathbf{c}_i, \mathbf{v}_i, \mathbf{e}_{i,j})\right)
\end{align}
where $\mathbf{f}_{\mathcal{C}}$, $\mathbf{f}_{\mathcal{V}}$, $\mathbf{g}_{\mathcal{C}}$, and $\mathbf{g}_{\mathcal{V}}$ are Multi-Layer Perceptron (MLP) \citep{orbach1962principles} models with two activation layers that use the ReLU function \citep{DBLP:journals/corr/abs-1803-08375}. After performing message passing \citep{DBLP:conf/icml/GilmerSRVD17}, a bipartite graph with the same topology is obtained, where the feature values of variable nodes and constraint nodes have been updated. Subsequently, an MLP layer is used to score the variable nodes, and a masked softmax operation is applied to obtain the probability distribution of each variable being selected. The process mentioned above can be expressed as $\boldsymbol{P}=\operatorname{softmax}(\operatorname{MLP}(\mathbf{v}))$. Here, $\boldsymbol{P}$ is the probability distribution of the output variables. During the training phase, GCNN learns to imitate the Strong Branching strategy. Upon completion of the training process, the model becomes ready for solving MILPs.



\section{Methodology}

As previously mentioned, acquiring Strong Branching expert samples for imitation learning poses a non-trivial challenge. In this section, we introduce CAMBranch, a novel approach designed to address this issue. Our first step involves the generation of Augmented MILPs (AMILPs) labeled with Strong Branching decisions, derived directly from the original MILPs. This augmentation process equips us with multiple expert samples essential for imitation learning, even when confronted with limited expert data. Subsequently, building upon the AMILPs, we proceed to create their augmented bipartite graphs. Finally, since MILPs and corresponding AMILPs share branching decisions, we view them as positive pairs. Leveraging the power of contrastive learning, we train our model to boost performance.


\subsection{Augmented Mixed Integer Linear Programming (AMILP)} 
To obtain AMILPs, we adopt  variable shift from $\boldsymbol{x}$ defined in Eq.(\ref{eq:mip}) to $\hat{\boldsymbol{x}}$ using a shift vector $\bm{s}$, denoted as $\hat{\boldsymbol{x}}=\boldsymbol{x}+\boldsymbol{s}$, where $\boldsymbol{s}$ is a shift vector. Note that if $x_i \in \mathbb{Z}$, then $s_i \in \mathbb{Z}$; otherwise $s_i \in \mathbb{R}$. Based on this translation, we can derive a MILP from Eq.(\ref{eq:mip}). To bring this model into standard form, we redefine the parameters as follows: $\hat{\boldsymbol{b}} = \boldsymbol{A} \boldsymbol{s} + \boldsymbol{b}$, $\hat{\boldsymbol{l}} = \boldsymbol{l} + \boldsymbol{s}$, and $\hat{\boldsymbol{u}} = \boldsymbol{u} + \boldsymbol{s}$. Consequently, the final expression for the AMILP model is represented as:
\begin{align}
    \label{eq:mip_shifted}
    \min_{\hat{\bm{x}}} \hspace{0.1cm} \bm{c}^{\mathrm{T}} \hat{\bm{x}} - \bm{c}^{\mathrm{T}}\bm{s} ~~~~ {\rm{s.t.}}~ \bm{A}\hat{\bm{x}} \le \hat{\bm{b}}, \hspace{0.1cm} &\hat{\bm{l}} \leq \hat{\bm{x}} \leq \hat{\bm{u}}, \hspace{0.1cm} \hat{x}_j\in \mathbb{Z},~ \forall j\in \mathcal{I}
\end{align}
Through this data augmentation technique, a single MILP has the capacity to generate multiple AMILPs. It's worth noting that each MILP, along with its corresponding AMILPs, share identical variable selection decisions of the Strong Branching. We next present a theorem to demonstrate this characteristic. To illustrate this distinctive characteristic, we begin by introducing a lemma that elucidates the relationship between MILPs and AMILPs.

\begin{lemma}
For MILPs Eq.(\ref{eq:mip}) and their corresponding AMILPs Eq.(\ref{eq:mip_shifted}), let the optimal solutions of the LP relaxation be denoted as $\boldsymbol{x}^*$ and $\hat{\boldsymbol{x}}^*$, respectively. A direct correspondence exists between these solutions, demonstrating that $\hat{\boldsymbol{x}}^* = \boldsymbol{x}^* + \boldsymbol{s}$.
\label{lamma:rel_milp_amilp}
\end{lemma}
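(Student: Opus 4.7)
The plan is to exhibit the affine map $x \mapsto x + s$ as a bijection between the feasible regions of the two LP relaxations that preserves the objective value up to the constant $c^{T}s$ already absorbed into the AMILP objective. Because such an objective-preserving bijection carries optimal solutions to optimal solutions, the lemma will follow immediately.

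First I would drop the integrality constraints and write down the two LP relaxations explicitly: the original feasible region $\mathcal{F}=\{x : Ax\le b,\ l\le x\le u\}$ and the augmented one $\hat{\mathcal{F}}=\{\hat x : A\hat x\le \hat b,\ \hat l\le \hat x\le \hat u\}$. Then I would verify the forward direction: given $x\in\mathcal{F}$, substitute $\hat x=x+s$ and check, using the definitions $\hat b = As+b$, $\hat l=l+s$, $\hat u=u+s$, that $A\hat x = Ax+As \le b+As=\hat b$ and $l+s\le x+s\le u+s$, so $\hat x\in\hat{\mathcal{F}}$. The reverse direction is symmetric by setting $x=\hat x-s$. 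This establishes a bijection.

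Next I would compute the objective: $c^{T}\hat x - c^{T}s = c^{T}(x+s)-c^{T}s = c^{T}x$. Hence the AMILP objective at $\hat x$ equals the MILP objective at $x=\hat x-s$. Combined with the bijection of feasible regions, this means $\hat x^{*}\in\hat{\mathcal{F}}$ minimizes $c^{T}\hat x - c^{T}s$ if and only if $x^{*}=\hat x^{*}-s\in\mathcal{F}$ minimizes $c^{T}x$, which is exactly the claimed correspondence $\hat x^{*}=x^{*}+s$.

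I do not expect a real obstacle here; the only subtlety is uniqueness of the optimum, since if multiple optima exist the statement should be read as \emph{a} correspondence between optimal sets rather than between single points. I would handle this by phrasing the conclusion as: the map $x\mapsto x+s$ is a bijection between the optimal-solution sets of the two LP relaxations, so for any choice of optimal $x^{*}$ the vector $x^{*}+s$ is optimal for the AMILP relaxation (and conversely). No properties of the simplex method, duality, or integrality are needed — the argument is purely a change of variables.
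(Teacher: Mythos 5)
Your proposal is correct and follows essentially the same route as the paper's proof: a change of variables $\hat{\boldsymbol{x}} = \boldsymbol{x} + \boldsymbol{s}$ that maps one LP relaxation's feasible region bijectively onto the other's while preserving the objective up to the constant $\boldsymbol{c}^{\mathrm{T}}\boldsymbol{s}$. Your explicit remark that the correspondence should be read between optimal \emph{sets} when the optimum is not unique is a point the paper glosses over, but it does not change the substance of the argument.
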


The proof of this lemma is provided in the Appendix. Building upon this lemma, we can initially establish the relationship between the optimal values of a MILP and its corresponding AMILP, denoted as $\boldsymbol{c}^{\mathrm{T}} \boldsymbol{x}^*=\boldsymbol{c}^{\mathrm{T}} \hat{\boldsymbol{x}}^*-\boldsymbol{c}^{\mathrm{T}} \boldsymbol{s}$. This equation signifies the equivalence of their optimal values. With the above information in mind, we proceed to introduce the following theorem.


\begin{thm}
\label{thm:same_decisions}
Suppose that an AMILP instance is derived by shifting variables from its original MILP. When employing Strong Branching to solve these instances, it becomes evident that both the MILP and AMILP consistently produce identical variable selection decisions at each branching step within B\&B.
\end{thm}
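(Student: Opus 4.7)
The plan is to proceed by induction on depth in the B\&B tree, showing that at every node the MILP subproblem and the AMILP subproblem remain related by the fixed shift $\bm{s}$, and that consequently Strong Branching produces the same candidate set and the same per-candidate score.

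First, I would handle the base case at the root. By construction the AMILP is obtained from the MILP via the shift $\bm{s}$, and Lemma \ref{lamma:rel_milp_amilp} gives $\hat{\bm{x}}^{*}=\bm{x}^{*}+\bm{s}$ for the LP relaxation optima, with objective values differing by the constant $\bm{c}^{\mathrm{T}}\bm{s}$. Because $s_i\in\mathbb{Z}$ for every integer-constrained index $i\in\mathcal{I}$, the fractional part of $x_i^{*}$ equals the fractional part of $\hat{x}_i^{*}$. Hence the set of branching candidates (fractional integer variables) is identical for the MILP and its AMILP.

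Second, I would compare the Strong Branching score for an arbitrary candidate $i$. Strong Branching in the MILP forms two child LPs by appending $x_i\le\lfloor x_i^{*}\rfloor$ and $x_i\ge\lceil x_i^{*}\rceil$, while in the AMILP it appends $\hat{x}_i\le\lfloor\hat{x}_i^{*}\rfloor$ and $\hat{x}_i\ge\lceil\hat{x}_i^{*}\rceil$. Since $s_i\in\mathbb{Z}$, we have $\lfloor\hat{x}_i^{*}\rfloor=\lfloor x_i^{*}\rfloor+s_i$ and $\lceil\hat{x}_i^{*}\rceil=\lceil x_i^{*}\rceil+s_i$, so each AMILP child subproblem is exactly the shift by $\bm{s}$ of the corresponding MILP child subproblem. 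Applying Lemma \ref{lamma:rel_milp_amilp} to these child LPs, their optimal values again differ by the constant $\bm{c}^{\mathrm{T}}\bm{s}$, so the LP-bound improvements used in the Strong Branching score cancel this constant and coincide across MILP and AMILP. Therefore Strong Branching ranks all candidates identically and selects the same variable index $i^{\star}$. Descending into the chosen child, the selected MILP subproblem and its AMILP counterpart are still related by the same shift $\bm{s}$, so the inductive hypothesis is preserved and the argument iterates over all visited nodes.

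The step I expect to be the main obstacle is verifying that every piece of data Strong Branching inspects carries over cleanly under the shift: not only that branching constraints translate correctly (which follows from $s_i\in\mathbb{Z}$ and hence $\lfloor x_i^{*}+s_i\rfloor=\lfloor x_i^{*}\rfloor+s_i$), but also that any deterministic tie-breaking rule depends only on information shared by the two problems, namely variable indices and LP quantities that cancel the constant $\bm{c}^{\mathrm{T}}\bm{s}$. Once these bookkeeping details are in place, the score comparison reduces to the single observation that $\bm{c}^{\mathrm{T}}\bm{s}$ is a global constant annihilated in every difference of LP bounds, and the inductive step goes through immediately.
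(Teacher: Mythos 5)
Your proposal follows essentially the same route as the paper: both reduce the claim to applying Lemma \ref{lamma:rel_milp_amilp} to the two child LPs of each candidate, concluding that the lower-bound improvements (and hence the Strong Branching scores) coincide. You are somewhat more careful than the paper — making the induction over the tree explicit, verifying that the candidate sets agree because $s_i\in\mathbb{Z}$ preserves fractionality, and checking the identity $\lfloor x_i^*+s_i\rfloor=\lfloor x_i^*\rfloor+s_i$ that makes the child subproblems correspond under the shift — but these are refinements of the same argument rather than a different one.
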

\begin{proof}
In the context of solving a MILP $\mathcal{M}$ using Strong Branching, the process involves pre-branching all candidate variables at each branching step, resulting in sub-MILPs. Solving the linear programing (LP) relaxations of these sub-MILPs provides the optimal values, which act as potential lower bounds for $\mathcal{M}$. The Strong Branching strategy chooses the candidate variable that offers the most substantial lower bound improvement as the branching variable for that step. Thus, the goal of this proof is to demonstrate that the lower bound increments after each branching step are equal when applying Strong Branching to solve both a MILP and its corresponding AMILP.



Given a MILP's branching variable $x_i$ and its corresponding shifted variable of AMILP $\hat{x}_i=x_i+s_i$, we perform branching operations on both variables. Firstly, we branch on  $x_i$ to produce two subproblems for MILP, which are formulated as follows:
\begin{align}
\label{eq:one}    \underset{\boldsymbol{x}}{\arg \min }\left\{\boldsymbol{c}^{\mathrm{T}} \boldsymbol{x} \mid \boldsymbol{A} \boldsymbol{x} \leqslant \boldsymbol{b}, \boldsymbol{l} \leqslant \boldsymbol{x} \leqslant \boldsymbol{u}, x_i \leqslant\left\lfloor x_i^*\right\rfloor, x_j \in \mathbb{Z}, \forall j \in\boldsymbol{\mathcal{I}}\right\}\\
\label{eq:two} \underset{\boldsymbol{x}}{\arg \min }\left\{\boldsymbol{c}^{\mathrm{T}} \boldsymbol{x} \mid \boldsymbol{A} \boldsymbol{x} \leqslant \boldsymbol{b}, \boldsymbol{l} \leqslant \boldsymbol{x} \leqslant \boldsymbol{u}, x_i \geqslant\left\lceil x_i^*\right\rceil, x_j \in \mathbb{Z}, \forall j \in \boldsymbol{\mathcal{I}}\right\}
\end{align}
where $x_i^*$ represents the value of variable  $x_i$ in the optimal solution corresponding to the MILP LP relaxation. Likewise, we branch on the shifted variable $\hat{x}_i$, which generates two sub-problems for AMILP, as represented by the following mathematical expressions:
\begin{align}
    \label{eq:proof_shifted_left}\argmin_{\hat{\bm{x}}} \{\bm{c}^{\mathrm{T}} \hat{\bm{x}} - \bm{c}^{\mathrm{T}}\bm{s} |\bm{A}\hat{\bm{x}} \le \hat{\bm{b}}, \hat{\bm{l}} \leq \hat{\bm{x}} \leq \hat{\bm{u}}, \hat{x}_i \le \lfloor \hat{x}_i^* \rfloor, \hat{x}_j\in \mathbb{Z}, \forall j\in \mathcal{I} \} \\
    \label{eq:proof_shifted_right}\argmin_{\hat{\bm{x}}} \{\bm{c}^{\mathrm{T}} \hat{\bm{x}} - \bm{c}^{\mathrm{T}}\bm{s} |\bm{A}\hat{\bm{x}} \le \hat{\bm{b}}, \hat{\bm{l}} \leq \hat{\bm{x}} \leq \hat{\bm{u}}, \hat{x}_i \ge \lceil \hat{x}_i^* \rceil, \hat{x}_j\in \mathbb{Z}, \forall j\in \mathcal{I} \} 
\end{align}
where $\hat{x_i}^*$ represents the value of variable $\hat{x_i}$ in the optimal solution corresponding to the AMILP LP relaxation. 


According to Lemma \ref{lamma:rel_milp_amilp}, the LP relaxations of Eq.(\ref{eq:one}) and Eq.(\ref{eq:proof_shifted_left}) have optimal solutions that can be obtained through variable shifting and these two LP relaxations have equivalent optimal values. Similarly, the optimal values of the LP relaxations of Eq.(\ref{eq:two}) and Eq.(\ref{eq:proof_shifted_right}) are also equal. Thus, for $\hat{x}_i$ and ${x}_i$, the lower bound improvements of the subproblems generated from  MILP Eq.(\ref{eq:mip}) and AMILP Eq.(\ref{eq:mip_shifted}) are equivalent, demonstrating identical branching decisions. The proof is completed.
\end{proof}
\vspace{-0.5em}
Based on Theorem \ref{thm:same_decisions}, it is evident that the generated AMILPs are equipped with expert decision labels, making them readily suitable for imitation learning.

\subsection{Augmented Bipartite Graph}

After obtaining the AMILP, the subsequent task involves constructing the augmented bipartite graph using a modeling approach akin to the one introduced by \citet{gasse2019exact}. To achieve this, we leverage the above relationship between MILP and AMILP to derive the node features for the augmented bipartite graph from the corresponding node features of the original bipartite graph, as outlined in Table \ref{tab:table1}. For a detailed overview of the relationships between node features in the augmented and original bipartite graphs, please refer to the Appendix.

\subsubsection{Constraint Node Features}
It is worth noting that the AMILP is derived from a translation transformation of the MILP, resulting in certain invariant features: (1) cosine similarity between constraint coefficients and objective function coefficients; (2) tightness state of the LP relaxation solution within constraints; (3) LP age. Additionally, for the \textit{bias} feature, representing the right-hand term, the transformed feature is $b_i+\boldsymbol{a}_i^{\mathrm{T}} \boldsymbol{s}$. To obtain this term, consider the $i$-th constraint node, which corresponds to the $i$-th constraint of $\boldsymbol{a_i}^{\mathrm{T}} \boldsymbol{x} \leqslant b_i$, After translation, this constraint can be represented as $\boldsymbol{a}_i \hat{\boldsymbol{x}} \leqslant b_i+\boldsymbol{a}_i^{\mathrm{T}} \boldsymbol{s}$, leading to the \textit{bias} feature  $b_i+\boldsymbol{a}_i^{\mathrm{T}} \boldsymbol{s}$. For \textit{dualsol\_val} feature, we consider proposing the following theorem for the explanation.

\begin{thm}
\label{thm:dual_cons}
For MILPs Eq.(\ref{eq:mip}) and AMILPs Eq.(\ref{eq:mip_shifted}), let the optimal solution of the dual problem of the LP relaxations be denoted as $\boldsymbol{y}^*$ and $\hat{\boldsymbol{y}}^*$, respectively. Then, a direct correspondence exists between these solutions, indicating that $\boldsymbol{y}^* = \hat{\boldsymbol{y}}^*$.
\end{thm}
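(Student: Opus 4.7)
The strategy is to write the duals of the LP relaxations of both the MILP in Eq.(\ref{eq:mip}) and the AMILP in Eq.(\ref{eq:mip_shifted}) in explicit form, and then show that after substituting $\hat{\boldsymbol{b}}$, $\hat{\boldsymbol{l}}$, $\hat{\boldsymbol{u}}$ the two duals reduce to the \emph{same} optimization problem in the dual variables. Once the two programs are literally identical, every optimal solution of one is optimal for the other, so in particular $\boldsymbol{y}^{*}=\hat{\boldsymbol{y}}^{*}$.

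Concretely, I would first rewrite the LP relaxation of Eq.(\ref{eq:mip}) with the box constraints split as $\boldsymbol{x}\le\boldsymbol{u}$ and $-\boldsymbol{x}\le-\boldsymbol{l}$, and introduce nonnegative multipliers $\boldsymbol{y},\boldsymbol{w},\boldsymbol{z}$ for the three constraint blocks $\boldsymbol{A}\boldsymbol{x}\le\boldsymbol{b}$, $\boldsymbol{x}\le\boldsymbol{u}$, $-\boldsymbol{x}\le-\boldsymbol{l}$. Standard Lagrangian duality (eliminating the free $\boldsymbol{x}$) yields the dual feasibility equation $\boldsymbol{A}^{\mathrm{T}}\boldsymbol{y}+\boldsymbol{w}-\boldsymbol{z}=-\boldsymbol{c}$ and the dual program
\begin{align*}
\max_{\boldsymbol{y},\boldsymbol{w},\boldsymbol{z}\ge\boldsymbol{0}}\; -\boldsymbol{b}^{\mathrm{T}}\boldsymbol{y}-\boldsymbol{u}^{\mathrm{T}}\boldsymbol{w}+\boldsymbol{l}^{\mathrm{T}}\boldsymbol{z}\quad\text{s.t.}\quad \boldsymbol{A}^{\mathrm{T}}\boldsymbol{y}+\boldsymbol{w}-\boldsymbol{z}=-\boldsymbol{c}.
\end{align*}

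Repeating the identical procedure on the AMILP gives the same dual feasibility equation (with hatted multipliers) but a dual objective that picks up the constant $-\boldsymbol{c}^{\mathrm{T}}\boldsymbol{s}$ from the primal offset and uses $\hat{\boldsymbol{b}}=\boldsymbol{A}\boldsymbol{s}+\boldsymbol{b}$, $\hat{\boldsymbol{u}}=\boldsymbol{u}+\boldsymbol{s}$, $\hat{\boldsymbol{l}}=\boldsymbol{l}+\boldsymbol{s}$ in place of $\boldsymbol{b},\boldsymbol{u},\boldsymbol{l}$. Expanding and collecting the $\boldsymbol{s}$-dependent terms produces $-\boldsymbol{c}^{\mathrm{T}}\boldsymbol{s}-\boldsymbol{s}^{\mathrm{T}}(\boldsymbol{A}^{\mathrm{T}}\hat{\boldsymbol{y}}+\hat{\boldsymbol{w}}-\hat{\boldsymbol{z}})$, which the dual feasibility equation $\boldsymbol{A}^{\mathrm{T}}\hat{\boldsymbol{y}}+\hat{\boldsymbol{w}}-\hat{\boldsymbol{z}}=-\boldsymbol{c}$ immediately collapses to $-\boldsymbol{c}^{\mathrm{T}}\boldsymbol{s}+\boldsymbol{s}^{\mathrm{T}}\boldsymbol{c}=0$. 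Hence the AMILP dual coincides with the MILP dual as an optimization problem, and $\boldsymbol{y}^{*}=\hat{\boldsymbol{y}}^{*}$ follows.

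The main obstacle is purely bookkeeping: one has to be careful with the sign convention for the dual of a minimization with $\le$ constraints, and with keeping the three blocks of dual variables correctly paired with the general inequalities, the upper bounds, and the lower bounds so that the cancellation in the previous paragraph is transparent. A minor subtlety, if the dual optimum is not unique, is that the statement $\boldsymbol{y}^{*}=\hat{\boldsymbol{y}}^{*}$ should be read as an equality of the \emph{sets} of optima, which is automatic once the two duals are shown to be literally the same program; this mirrors Lemma~\ref{lamma:rel_milp_amilp} and can be viewed as its dual counterpart.
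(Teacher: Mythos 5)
Your proposal is correct and follows essentially the same route as the paper: both dualize the LP relaxation with the box constraints absorbed into the inequality system (the paper folds them into a single matrix $\bar{\boldsymbol{A}}$, you keep them as three explicit blocks), and both observe that after substituting $\hat{\boldsymbol{b}}=\boldsymbol{A}\boldsymbol{s}+\boldsymbol{b}$, $\hat{\boldsymbol{l}}=\boldsymbol{l}+\boldsymbol{s}$, $\hat{\boldsymbol{u}}=\boldsymbol{u}+\boldsymbol{s}$ the $\boldsymbol{s}$-dependent terms in the dual objective cancel via the dual feasibility equation, leaving two literally identical programs. Your closing remark that the equality should be read at the level of sets of optima is a fair refinement that the paper glosses over, but it does not change the argument.
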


The proof can be found in the Appendix. From Theorem \ref{thm:dual_cons}, we can conclude that the \textit{dualsol\_val} feature of the augmented bipartite graph remains unchanged compared to the original bipartite graph. Thus, we have successfully determined the constraint node features for the AMILP's bipartite graph through the preceding analysis.

\subsubsection{Edge Features}

Given that an AMILP is generated from the original MIP through variable shifting, the coefficients of the constraints remain invariant throughout this transformation. Consequently, the values of the edge features in the bipartite graph, which directly reflect the coefficients connecting variable nodes and constraint nodes, also remain unchanged.

\subsubsection{Variable Node Features}
Similarly to constraint node features, several variable node features also remain unaltered during the transformation. These include (1) the variable type (i.e., integer or continuous); (2) the coefficients of variables corresponding to the objective function; (3) whether the variable has upper and lower bounds; (4) whether the solution value of a variable is within the bounds; (5) whether the solution value of a variable has a decimal part; (6) the status of the corresponding basic vector; (7) the LP age.  For \textit{reduced\_cost} features, we consider the following theorem for clarification.

\begin{thm}
\label{thm:var_reduce}
For MILPs Eq.(\ref{eq:mip}) and their corresponding AMILPs Eq.(\ref{eq:mip_shifted}), consider the reduced cost corresponding to LP relaxations for a MILP, denoted as $\sigma_i$, and for an AMILP, denoted as $\hat{\sigma}_i$. Then, a direct correspondence exists between these reduced costs, implying that $\sigma_i=\hat{\sigma}_i$.
\end{thm}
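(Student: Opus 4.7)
The plan is to reduce the claim to Theorem \ref{thm:dual_cons} by expressing the reduced cost of a variable purely in terms of data that the shift leaves invariant (the cost vector $\boldsymbol{c}$ and the column $\boldsymbol{a}_i$) together with the dual optimum of the LP relaxation.

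First, I would recall the standard algebraic characterization of the reduced cost at an optimal basis of the LP relaxation of the MILP (\ref{eq:mip}): $\sigma_i = c_i - \boldsymbol{a}_i^{\mathrm{T}} \boldsymbol{y}^*$, where $\boldsymbol{a}_i$ is the $i$-th column of $\boldsymbol{A}$ and $\boldsymbol{y}^*$ is the dual solution associated with the constraints $\boldsymbol{A}\boldsymbol{x} \leq \boldsymbol{b}$. Writing down the same identity for the AMILP (\ref{eq:mip_shifted}) gives $\hat{\sigma}_i = c_i - \boldsymbol{a}_i^{\mathrm{T}} \hat{\boldsymbol{y}}^*$: the coefficient matrix $\boldsymbol{A}$ is preserved verbatim under the shift, and the extra summand $-\boldsymbol{c}^{\mathrm{T}} \boldsymbol{s}$ appearing in the shifted objective is a constant that does not influence reduced costs.

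Second, I would invoke Theorem \ref{thm:dual_cons} to conclude $\boldsymbol{y}^* = \hat{\boldsymbol{y}}^*$. Substituting this equality into the two displayed expressions immediately yields $\sigma_i = \hat{\sigma}_i$, which is precisely the statement of the theorem.

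The only subtlety I anticipate — and the place I expect to spend a sentence or two of care — is that practical LP solvers typically fold bound-constraint multipliers into the reduced cost as well. To handle this cleanly, I would appeal to Lemma \ref{lamma:rel_milp_amilp}, which gives $\hat{\boldsymbol{x}}^* = \boldsymbol{x}^* + \boldsymbol{s}$, together with the identities $\hat{\boldsymbol{l}} = \boldsymbol{l}+\boldsymbol{s}$ and $\hat{\boldsymbol{u}} = \boldsymbol{u}+\boldsymbol{s}$: the same variables sit at their lower (respectively upper) bounds in both LPs, so the same basis is optimal. Complementary slackness then forces the bound multipliers to agree on both sides, so the reduced-cost identity $\sigma_i = \hat{\sigma}_i$ persists under the more general solver-level definition as well.
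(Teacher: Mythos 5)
Your proof is correct and follows essentially the same route as the paper's: the paper writes the reduced cost as $\sigma_j = c_j - \boldsymbol{c}_{\boldsymbol{B}}^{\mathrm{T}}\boldsymbol{B}^{-1}\boldsymbol{A}_j$, which is exactly your $c_i - \boldsymbol{a}_i^{\mathrm{T}}\boldsymbol{y}^*$ since $\boldsymbol{y}^{*\mathrm{T}} = \boldsymbol{c}_{\boldsymbol{B}}^{\mathrm{T}}\boldsymbol{B}^{-1}$, and then observes that every ingredient ($c_j$, the column of $\boldsymbol{A}$, and the basis data) is unchanged by the shift. If anything, yours is the more careful version: the paper silently reuses the same optimal basis $\boldsymbol{B}$ and $\boldsymbol{c}_{\boldsymbol{B}}$ for both LP relaxations without justification, whereas you discharge that step explicitly via Theorem \ref{thm:dual_cons} and the bound-status argument from Lemma \ref{lamma:rel_milp_amilp}.
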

The proof is provided in the Appendix. Moreover, the features \textit{sol\_val}, \textit{inc\_val}, and \textit{avg\_inc\_val} all exhibit shifts in their values corresponding to the shift vector $s$. With all the above, we have successfully acquired all the features of the augmented bipartite graph.

\subsection{Contrastive Learning}
Contrastive learning has been widely adopted in various domains \citep{DBLP:conf/cvpr/He0WXG20,DBLP:conf/icml/ChenK0H20,khosla2020supervised, xu2022pisces, DBLP:conf/acl/IterGLJ20, DBLP:conf/acl/GiorgiNWB20, DBLP:conf/sigir/YuY00CN22}. The fundamental idea behind contrastive learning is to pull similar data points (positives) closer together in the feature space while pushing dissimilar ones (negatives) apart. Within our proposed CAMBranch, we leverage this principle by viewing a MILP and its corresponding AMILP as positive pairs while considering the MILP and other AMILPs within the same batch as negative pairs. This enables us to harness the power of contrastive learning to enhance our training process. 

We initiate the process with a MILP bipartite graph $\left(\mathcal{G}_{\text {ori }}, \mathbf{C}_{\text {ori }}, \mathbf{E}_{\text {ori }}, \mathbf{V}_{\text {ori }}\right)$ and its augmented counterpart $\left(\mathcal{G}_{\text {aug }}, \mathbf{C}_{\text {aug }}, \mathbf{E}_{\text {aug }}, \mathbf{V}_{\text {aug }}\right)$. These graphs undergo processing with a GCNN, following the message passing illustrated in Eq.(\ref{eq:bipartite_mp}). This results in updated constraint and variable node features $\mathbf{C}_{\text {ori }}^{\prime}$ and $\mathbf{V}_{\text{ori}}^{\prime}$ for the MILP, along with $\mathbf{C}_{\text {aug }}^{\prime}$ and $\mathbf{V}_{\text {aug }}^{\prime}$ for the AMILP. Subsequently, we generate graph-level representations for both bipartite graphs. To achieve this, we conduct max and average pooling on the constraint nodes and variable nodes, respectively. Merging these embeddings using an MLP yields pooled embeddings for constraint and variable nodes, denoted as $\boldsymbol{c}_{\text {ori }}^{\mathcal{G}}, \boldsymbol{v}_{\text {ori }}^{\mathcal{G}}$ for the MILP and $\boldsymbol{c}_{\text {ori }}^{\mathcal{G}}, \boldsymbol{v}_{\text {ori }}^{\mathcal{G}}$ for the AMILP. These embeddings serve as inputs to another MLP, resulting in graph-level embeddings $\boldsymbol{g}_{\text{ori}}$ and $\boldsymbol{g}_{\text{aug}}$ for MILP and AMILP bipartite graphs, respectively. To train our model using contrastive learning, we treat $\boldsymbol{g}_{\text {ori }}$ and its corresponding $\boldsymbol{g}_{\mathrm{aug}}$ as positive pairs, while considering other AMILPs in the same batch as negative samples. By applying infoNCE \citep{DBLP:journals/corr/abs-1807-03748} loss, we have
\begin{equation}
\small
\mathcal{L}^{(\text {infoNCE })}=-\sum_{i=1}^{n_{\text {batch }}} \log \left(\frac{\exp \left(\tilde{\boldsymbol{g}}_{\text {ori }}^{\mathrm{T}}(i) \cdot \tilde{\boldsymbol{g}}_{\text {aug }}(i)\right)}{\sum_{j=1}^{n_{\text {batch }}} \exp \left(\tilde{\boldsymbol{g}}_{\text {ori }}^{\mathrm{T}}(i) \cdot \tilde{\boldsymbol{g}}_{\text {aug }}(j)\right)}\right)
\end{equation}
where $n_{\text {batch }}$ represents the number of samples in a training batch. $\tilde{\boldsymbol{g}}_{\text {ori }}$  and $\tilde{\boldsymbol{g}}_{\text {aug }}$ denote the normalized vectors of $\boldsymbol{g}_{\text {ori }}$ and $\boldsymbol{g}_{\mathrm{aug}}$, respectively. This contrastive learning approach enhances our model's ability to capture representations for MILPs, which further benefits the imitation learning process. The imitation learning process follows \citet{gasse2019exact}. More details can be found in the Appendix \ref{sec:apd_IL}.
\vspace{-0.5em}
\section{Experiment}
We evaluate our proposed CAMBranch by fully following the settings in \citet{gasse2019exact}. Due to the space limit, we briefly introduce the experimental setup and results. More details are provided in the Appendix \ref{sec:app_exp_detials}.
\vspace{-0.5em}
\subsection{Setup}
\textbf{Benchmarks.} Following \citet{gasse2019exact}, we assess our method on four NP-hard problems, i.e., Set Covering \citep{balas1980set}, Combinatorial Auction \citep{DBLP:conf/sigecom/Leyton-BrownPS00}, Capacitated Facility Location \citep{1991A}, and Maximum Independent Set \citep{Cire2015Decision}. Each problem has three levels of difficulty, that is, \textit{Easy}, \textit{Medium}, and \textit{Hard}. We train and test models on each benchmark separately. Through the experiments, we leverage SCIP 6.0.1 \citep{GleixnerEtal2018OO} as the backend solver and set the time limit as 1 hour. See more details in the supplementary materials.

\textbf{Baselines.} We compare CAMBranch with the following branching strategies: (1) Reliability Pseudocost Branching (\texttt{RPB}) \citep{DBLP:journals/orl/AchterbergKM05}, a state-of-the-art human-designed branching policy and the default branching rule of the SCIP solver; (2) GCNN \citep{gasse2019exact}, a state-of-the-art neural branching policy; (3) GCNN (10\%), which uses only 10\% of the training data from \citet{gasse2019exact}. This is done to ensure a complete comparison since CAMBranch also utilizes 10\% of the data.

\textbf{Data Collection and Split.} The expert samples for imitation learning are collected from SCIP rollout with Strong Branching on the \textit{Easy} level instances. Following \citet{gasse2019exact}, we train GCNN with 100k expert samples, while CAMBranch is trained with 10\% of these samples. Trained with \textit{Easy} level samples, the models are tested on all three level instances. Each level contains 20 new instances for evaluation using five different seeds, resulting in 100 solving attempts for each difficulty level.

\textbf{Metrics.} Following \citet{gasse2019exact}, our metrics are standard for MILP benchmarking, including solving time, number of nodes in the branch and bound search tree, and number of times each method achieves the best solving time among all methods (number of wins). For the first two metrics, smaller values are indicative of better performance, while for the latter, higher values are preferable.
\vspace{-0.5em}
\subsection{Experimental Results}
\label{subsec:exp_res}

To assess the effectiveness of our proposed CAMBranch, we conducted the evaluation from two aspects: imitation learning accuracy and MILP instance-solving performance. The former measures the model's ability to imitate the expert strategy, i.e., the Strong Branching strategy. Meanwhile, MILP instance-solving performance evaluates the quality and efficiency of the policy network's decisions, emphasizing critical metrics such as MILP solving time and the size of the B\&B tree (i.e., the number of nodes) generated during the solving process.

\vspace{-0.5em}
\subsubsection{Imitation Learning Accuracy}
First, we initiated our evaluation by comparing CAMBranch with baseline methods in terms of imitation learning accuracy. Following \citet{gasse2019exact}, we curated a test set comprising 20k expert samples for each problem. The results are depicted in Figure \ref{fig:IL_acc}. Notably, CAMBranch, trained with 10\% of the full training data, outperforms GCNN (10\%), demonstrating that our proposed data augmentation and contrastive learning framework benefit the imitation learning process. Moreover, CAMBranch's performance, while exhibiting a slight lag compared to GCNN \citep{gasse2019exact} trained on the entire dataset, aligns with expectations, considering the substantial difference in the size of the training data. CAMBranch delivers comparable performance across three of the four problems, with the exception being the Set Covering problem.
\vspace{-0.5em}
\subsubsection{Instance Solving Evaluation}
\begin{wrapfigure}{r}{0.3\textwidth}
    \centering
    \includegraphics[width=\linewidth]{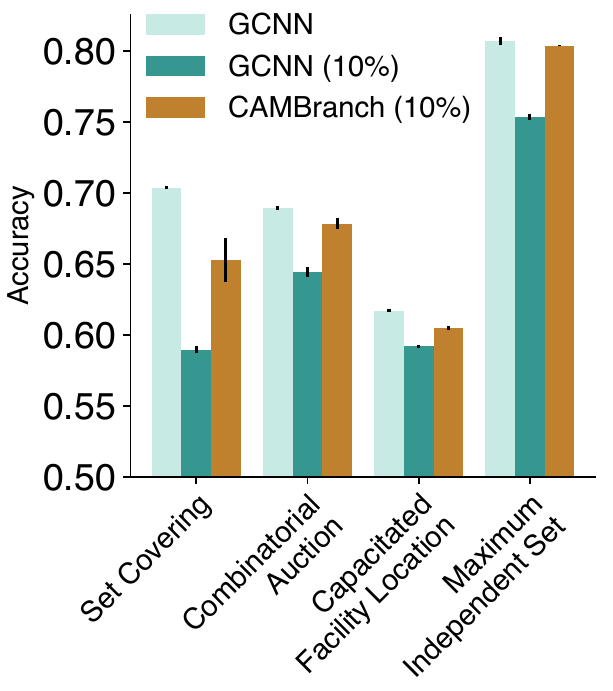}
    \vspace{-2em}
    \caption{Imitation learning accuracy on the test sets of expert samples.}
    \label{fig:IL_acc}
\end{wrapfigure}

Next, we sought to evaluate our proposed CAMBranch on MILP instance solving focusing on three key metrics: solving time, the number of B\&B nodes, and the number of wins, as illustrated by Table \ref{tab:app_res_eval}. 

Solving time reflects the efficiency of each model in solving MILP instances. As evident in Table \ref{tab:app_res_eval}, we observed that as problem complexity increases, the solving time also rises substantially, with an obvious gap between \textit{Easy} and \textit{Hard} levels. In the \textit{Easy} level, all strategies exhibit similar solving time, with only a maximum gap of about 5 seconds. However, for the \textit{Medium} and \textit{Hard} levels, differences become more significant. Notably, delving into each strategy, we found that, neural network-based policies consistently outperform the traditional \texttt{RPB}, demonstrating the potential of replacing heuristic methods with machine learning-based approaches. Moreover, CAMBranch exhibits the fastest solving process in most cases, particularly in challenging instances. For example, in the hard-level Capacitated Facility Location problem, CAMBranch achieved a solving time of 470.83 seconds, nearly 200 seconds faster than GCNN. Furthermore, CAMBranch outperforms GCNN (10\%) across various instances, reaffirming the effectiveness of our CAMBranch framework.

The number of B\&B nodes serves as a measure of branching decision quality, with fewer nodes indicating better decision quality. Table \ref{tab:app_res_eval} presents similar observations to solving time. CAMBranch consistently outperforms GCNN in most cases, especially in challenging scenarios like the hard-level Maximum Independent Set problem. On average, CAMBranch generates fewer nodes than GCNN (10\%), highlighting the efficacy of our data augmentation and contrastive learning network. However, it's worth noting that in some cases, \texttt{RPB} generates the fewest nodes, particularly in the Capacitated Facility Location problem. Nevertheless, this doesn't translate into shorter solving times, as certain \texttt{RPB} decisions are time-consuming.

The number of wins quantifies the instances in which the model achieves the shortest solving time. Higher win counts indicate better performance. With this metric, we examined models at the instance level. From Table \ref{tab:app_res_eval}, we found that GCNN obtains the most times of getting the fastest solving process in the Set Covering problem and the Combinatorial Auction problem (\textit{Easy} and \textit{Medium}). However, for the remaining problems, CAMBranch leads in this metric. Additionally, CAMBranch tends to optimally solve the highest number of instances, except in the case of Set Covering. These results underline the promise of our proposed method, especially in scenarios with limited training data. Collectively, CAMBranch's prominent performance across these three metrics underscores the importance of MILP augmentation and the effectiveness of our contrastive learning framework.

\vspace{-1.3em}

\begin{table*}[ht]
\centering
\caption{Policy evaluation in terms of solving time, number of B\&B nodes, and number of wins over number of solved instances on four combinatorial optimization problems. Each level contains 20 instances for evaluation using five different seeds, resulting in 100 solving attempts for each difficulty level. Bold CAMBranch numbers denote column-best results among neural policies.}
\label{tab:app_res_eval}
\resizebox{\textwidth}{!}{
\begin{tabular}{ccccccccccccc} 
\addlinespace
& \multicolumn{3}{c}{ Easy } & \multicolumn{3}{c}{ Medium } & \multicolumn{3}{c}{ Hard} \\
Model & Time & Wins & Nodes & Time & Wins & Nodes & Time & Wins & Nodes \\
\toprule
\texttt{FSB} & 17.98 & 0/100 & 27 & 345.91 & 0/90 & 223 & 3600.00 & - & - \\
\texttt{RPB}   & 7.82 & 4/100 & 103 & 64.77 & 10/100 & 2587 & 1210.18 & 32 / 63 & 80599 \\
GCNN & 6.03 & 57/100 & 167 & 50.11 & 81/100 & 1999 & 1344.59 & 36/68 & 56252 \\
GCNN (10\%) & 6.34 & 39/100 & 230 & 98.20 & 5/96 & 5062 & 2385.23 & 0/6 & 113344 \\
\midrule CAMBranch (10\%) & 6.79 & 0/100 & 188 & 61.00 & 4/100 & 2339 & 1427.02 & 0/55 & 66943 \\
\midrule
\midrule
& \multicolumn{3}{c}{  } & \multicolumn{3}{c}{ Set Covering} & \multicolumn{3}{c}{ } \\
\addlinespace[1em]
\midrule  
\texttt{FSB} & 4.71 &0/100 & 10 & 97.6 &0/100 & 90 & 1396.62 &0/64 & 381 \\
\texttt{RPB}  & 2.61 &1/100 & 21 & 19.68 &2/100 & 713 & 142.52 &29/100 & 8971 \\
GCNN  & 1.96 &43/100 & 87 & 11.30 &74/100 & 695 & 158.81 &19/94 & 12089 \\
GCNN (10\%) & 1.99 &44/100 & 102 & 12.38 &16/100 & 787 & 144.40 &10/100 & 10031 \\
 \midrule CAMBranch (10\%) & 2.03 &12/100 & 91 & 12.68 &8/100 & 758 & \textbf{131.79} &\textbf{42/100} & \textbf{9074} \\
\midrule
\midrule
& \multicolumn{3}{c}{  } & \multicolumn{3}{c}{Combinatorial Auction} & \multicolumn{3}{c}{ } \\
\addlinespace[1em]
\midrule \texttt{FSB}   & 34.94 &  0/100 & 54 & 242.51 & 0/100 & 114 & 995.40 & 0/82 & 84 \\
\texttt{RPB}  & 30.63 & 9/100 & 79 & 177.25 & 2/100 & 196 & 830.90 & 2/93 & 178 \\
GCNN & 24.72 & 25/100 & 169 & 145.17 & 13/100 & 405 & 680.78 & 5/95 & 449 \\
GCNN (10\%) & 26.30 & 15/100 & 180 & 124.49 & 48/100 & 406 & 672.88 & 11/95 & 423 \\
\midrule CAMBranch (10\%) & 24.91 & \textbf{50/100} & 183 & \textbf{124.36} & 37/100 & \textbf{390} & \textbf{470.83} & \textbf{77/95} & 428 \\
\midrule
\midrule
& \multicolumn{3}{c}{  } & \multicolumn{3}{c}{ Capacitated Facility Location} & \multicolumn{3}{c}{ } \\
\addlinespace[1em]
\midrule \texttt{FSB} & 28.85 & 10/100 & 19 & 1219.15 & 0/62 & 81 & 3600.00 & - & - \\
 \texttt{RPB} & 10.73 & 11/100 & 78 & 133.30 & 5/100 & 2917 & 965.67 & 10/40 & 17019 \\
GCNN & 7.17 & 11/100 & 90 & 164.51 & 4/99 & 5041 & 1020.58 & 0/17 & 21925 \\
GCNN (10\%) & 7.18 & 26/100 & 103 & 122.65 & 8/89 & 3711 & 695.96 & 2/20 & 17034 \\
\midrule CAMBranch (10\%) & \textbf{6.92} & \textbf{42/100} & \textbf{90} & \textbf{61.51} & \textbf{83/100} & \textbf{1479} & \textbf{496.86} & \textbf{33}/\textbf{40} & \textbf{10828} \\
\midrule
\midrule
& \multicolumn{3}{c}{  } & \multicolumn{3}{c}{ Maximum Independent Set} & \multicolumn{3}{c}{ } \\
\end{tabular}
}
\end{table*}


\vspace{-1.5em}
\subsubsection{Evaluation of Data Collection Efficiency}
In this part, we compared the efficiency of expert sample collection for GCNN in \cite{gasse2019exact} and our proposed CAMBranch. We focused on the Capacitated Facility Location Problem, which exhibits the lowest collection efficiency among the four MILP benchmarks and thus closely simulates real-world MILPs by low data collection efficiency. Generating 100k expert samples using Strong Branching to solve the instances takes 84.79 hours. In contrast, if obtaining the same quantity of expert samples, CAMBranch requires 8.48 hours (collecting 10k samples initially) plus 0.28 hours (generating the remaining 90k samples based on the initial 10k), totaling 8.76 hours—an 89.67\% time savings. This underscores the superiority of CAMBranch in data collection efficiency.

\vspace{-2em}
\subsection{Ablation Studies}

To further validate the effectiveness of contrastive learning, we conducted ablation studies. Specifically, we compared the performance of CAMBranch with contrastive learning to CAMBranch without contrastive learning but with data augmentation, denoted as CAMBranch w/o CL. These experiments were conducted on the Set Cover problem, and the results are displayed in Figure \ref{fig:ablation}. It is evident from the results that integrating contrastive learning significantly enhances CAMBranch's performance, providing compelling evidence of the efficacy of this integration within CAMBranch.
\begin{figure}
    \centering
    \includegraphics[width=\linewidth]{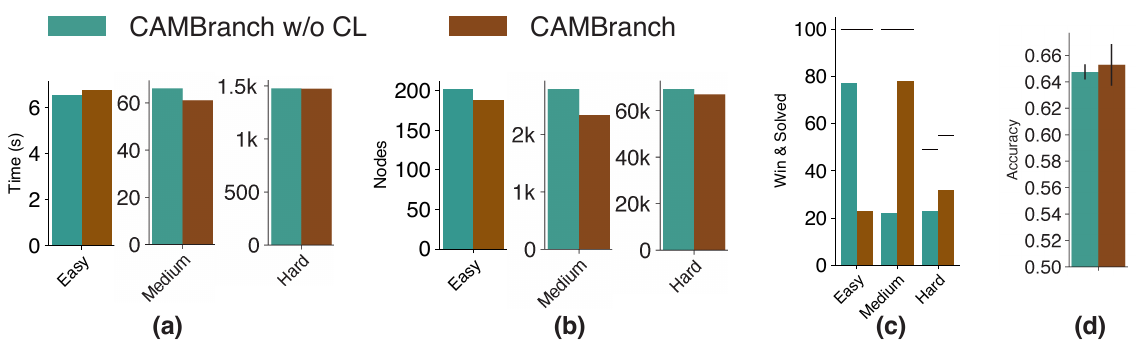}
    \vspace{-1em}
    \caption{Ablation experiment results on CAMBranch (10\%) for instance solving evaluation, including solving time (a), number of nodes (b), and number of wins (c), in addition to imitation learning accuracy (d) for the Set Covering problem.}
    \label{fig:ablation}
    \vspace{-1em}
\end{figure}
\vspace{-1em}
\subsection{Evaluating CAMBranch on Full Datasets}
Previous experiments have showcased CAMBranch's superiority in data-scarce scenarios. To further explore CAMBranch's potential, we conducted evaluations on complete datasets to assess its performance with the entire training data. Table \ref{tab:full_dataset_auction} presents the results of instance-solving evaluations for the Combinatorial Auction problem. The outcomes reveal that when trained with the full dataset, CAMBranch (100\%) surpasses GCNN (10\%), CAMBranch (10\%) and even GCNN (100\%). Notably, CAMBranch exhibits the fastest solving time for nearly 90\% of instances, underscoring its effectiveness. For \textit{Hard} instances, CAMBranch (100\%) demonstrates significant improvements across all instance-solving evaluation metrics. These findings affirm that our plug-and-play CAMBranch is versatile, excelling not only in data-limited scenarios but also serving as a valuable tool for data augmentation to enhance performance with complete datasets.

\vspace{-1em}
\begin{table*}[ht]
    \centering
    \caption{The results of evaluating the instance-solving performance for the Combinatorial Auction problem by utilizing the complete training dataset. Bold numbers denote the best results.}
	\label{tab:full_dataset_auction}
	\resizebox{\textwidth}{!}{
		\begin{tabular}{cccccccccc}
			\toprule
			\multirow{3}{*}{Model} & \multicolumn{3}{c}{Easy} & \multicolumn{3}{c}{Medium} & \multicolumn{3}{c}{Hard} \\
			\cmidrule(r){2-10}
			& Time    & Wins    & Nodes  & Time    & Wins    & Nodes  & Time    & Wins    &  \multicolumn{1}{c}{Nodes} \\
			\midrule
			GCNN (10\%)             &   1.99     &    	2/100    &    102  &   12.38     &    3/100    &    787   &   144.40   & 2/100      &   10031    \\
            GCNN (100\%) & 1.96 & 4/100 & \textbf{87} & 	11.30 & 7/100 & 695 & 158.81 & 4/94 & 12089\\
            \midrule
			CAMBranch (10\%) & 2.03 &  1/100 & 91 &  12.68 & 2/100 & 758 & 131.79 & 11/100 & 9074\\
            CAMBranch (100\%) & \textbf{1.73} & \textbf{93/100} & {88} & \textbf{10.04} & \textbf{88/100} & \textbf{690} & \textbf{109.96} & \textbf{83/100} & \textbf{8260}\\
		 \bottomrule  
		\end{tabular}
	}
\end{table*}

\vspace{-1em}
\section{Conclusion}
In this paper, we have introduced CAMBranch, a novel framework designed to address the challenge of collecting expert strategy samples for imitation learning when applying machine learning techniques to solve MILPs. By introducing variable shifting, CAMBranch generates AMILPs from the original MILPs, harnessing the collective power of both to enhance imitation learning. Our utilization of contrastive learning enhances the model's capability to capture MILP features, resulting in more effective branching decisions. We have evaluated our method on four representative combinatorial optimization problems and observed that CAMBranch exhibits superior performance, even when trained on only 10\% of the complete dataset. This underscores the potential of CAMBranch, especially in scenarios with limited training data. 

\section*{acknowledgement}
This work is supported by National Key R\&D Program of China (Grant No.2021YFC2902701).


\bibliography{iclr2024_conference}
\bibliographystyle{iclr2024_conference}

\newpage
\DoToC
\appendix
\newpage
\section{Background}
\subsection{Branch and Bound (B\&B)}
The B\&B algorithm operates in an iterative way at each node.  It commences with solving the LP relaxation of the original MILP problem Eq. (\ref{eq:mip}). This LP relaxation serves as a lower bound estimate for the original MILP (\ref{eq:mip}). If the optimal solution $\bm{x}^*$ of this LP relaxation satisfies all the constraints in (\ref{eq:mip}), its objective value provides an upper bound estimate for the original problem. However, if $\bm{x}^*$ contains non-integer variables $x_j^*$ (where $j\in \mathcal{I}$), the algorithm proceeds by splitting the problem into two sub-problems, each with additional constraints: $x_{j} \leq\left\lfloor x_{j}^* \right\rfloor$ and $x_{j} \geq\left\lceil x_{j}^*\right\rceil$, where $\left\lfloor .\right\rfloor$ and $\left\lceil .\right\rceil$ represent the floor and ceiling functions, respectively.

Throughout this iterative process, any sub-problem that exceeds the upper bound estimate is pruned, eliminating the need for further branching. This pruning strategy significantly enhances computational efficiency. The algorithm repeats the above process until either all nodes are pruned or the optimality gap reaches a predetermined threshold. At this point, the global optimal solution, along with its corresponding objective value, is successfully obtained.

\subsection{Strong Branching}
The core idea behind the Strong Branching strategy \citep{applegate1995finding} is to identify the candidate variable that, when chosen for branching, provides the maximum improvement in the lower bound of the problem. Specifically, this strategy involves pre-branching all candidate variables, solving their respective Linear Programming (LP) relaxation problems, and then selecting the variable that contributes the most to lower bound enhancement as the actual branching variable. When pre-branching is applied to all available candidate variables, and each LP relaxation problem is solved to optimality, this strategy is known as Full Strong Branching (\texttt{FSB}), which is the expert strategy for imitation learning in this paper. Essentially, Full Strong Branching can be seen as a greedy approach aimed at identifying the locally optimal variable for branching.

Full Strong Branching often leads to the smallest branching search tree. However, it comes at a considerable computational cost. Consequently, several variants of Full Strong Branching have been developed to mitigate this computational burden. These variants involve selecting only a subset of candidate variables for branching or limiting the number of iterations during the solution of each LP relaxation problem. It's important to note that while these modifications enhance computational efficiency, they do not fundamentally reduce the computational complexity inherent in the Strong Branching strategy.

\section{Proofs}

\textbf{Lemma 3.1.}  \textit{For  MILPs (Eq.(\ref{eq:mip})) and AMILPs (Eq.(\ref{eq:mip_shifted})), let the LP relaxation optimal solutions be denoted as $\boldsymbol{x}^*$ and $\hat{\boldsymbol{x}}^*$, respectively. Then, there exists a correspondence between these solutions such that $\hat{\boldsymbol{x}}^*={\boldsymbol{x}}^*+\boldsymbol{s}$.}

\begin{proof}
The LP relaxation problems of MILP and AMILP can be expressed as follows:
\begin{equation}
\label{eq:mlip_shifted}
\begin{aligned}
& \boldsymbol{x}^*=\underset{\boldsymbol{x}}{\arg \min }\left\{\boldsymbol{c}^{\mathrm{T}} \boldsymbol{x} \mid \boldsymbol{A} \boldsymbol{x} \leqslant \boldsymbol{b}, \boldsymbol{l} \leqslant \boldsymbol{x} \leqslant \boldsymbol{u}\right\} 
\end{aligned}
\end{equation}
and
\begin{equation}
\label{eq:mlip_1}
\begin{aligned}
& \hat{\boldsymbol{x}}^*=\underset{\hat{\boldsymbol{x}}}{\arg \min } \left\{\boldsymbol{c}^{\mathrm{T}} \hat{\boldsymbol{x}}-\boldsymbol{c}^{\mathrm{T}} \boldsymbol{s} \mid \boldsymbol{A} \hat{\boldsymbol{x}} \leqslant \hat{\boldsymbol{b}}, \hat{\boldsymbol{l}} \leqslant \hat{\boldsymbol{x}} \leqslant \hat{\boldsymbol{u}}\right\}
\end{aligned}
\end{equation}

We now assume that $\boldsymbol{y}={\boldsymbol{x}}^*+\boldsymbol{s}$,
so that Eq.(\ref{eq:mlip_shifted}) can be written as
\begin{equation}
\begin{aligned}
\boldsymbol{y}-\boldsymbol{s}=\underset{\boldsymbol{x}}{\arg \min }\left\{\boldsymbol{c}^{\mathrm{T}} \boldsymbol{x} \mid \boldsymbol{A} \boldsymbol{x} \leqslant \boldsymbol{b}, \boldsymbol{l} \leqslant \boldsymbol{x} \leqslant \boldsymbol{u}\right\} \\
\end{aligned}
\end{equation}
which implies that
\begin{equation}
\label{eq:mlip_2}
\begin{aligned}
\boldsymbol{y} & =\underset{\boldsymbol{x}}{\arg \min }\left\{\boldsymbol{c}^{\mathrm{T}} \boldsymbol{x} \mid \boldsymbol{A} \boldsymbol{x} \leqslant \boldsymbol{b}, \boldsymbol{l} \leqslant \boldsymbol{x} \leqslant \boldsymbol{u}\right\}+ \boldsymbol{s} \\
& =\underset{\boldsymbol{x}}{\operatorname{\arg\min}}\left\{\boldsymbol{c}^{\mathrm{T}} (\hat{\boldsymbol{x}}-\boldsymbol{s}) \mid A(\hat{\boldsymbol{x}}-\boldsymbol{s}) \leqslant b, \boldsymbol{l} \leqslant \hat{\boldsymbol{x}}-\boldsymbol{s} \leq u\right\}+\boldsymbol{s} \\
& =\underset{\boldsymbol{x}+ \boldsymbol{s}}{\operatorname{\arg\min}}\left\{\boldsymbol{c}^{\mathrm{T}} \hat{\boldsymbol{x}}-\boldsymbol{c}^{\mathrm{T}}\boldsymbol{s} \mid A \hat{\boldsymbol{x}} \leq A s+b, \boldsymbol{l}+\boldsymbol{s} \leqslant \hat{\boldsymbol{x}} \leqslant u+\boldsymbol{s}\right\}
\end{aligned}
\end{equation}

Clearly, Eq.(\ref{eq:mlip_1}) is equal to  Eq.(\ref{eq:mlip_2}), that is,  $\hat{\boldsymbol{x}}^*={\boldsymbol{x}}^*+\boldsymbol{s}$.
\end{proof}

\textbf{Theorem 3.1.} \textit{For MILPs Eq.(\ref{eq:mip}) and AMILPs Eq.(\ref{eq:mip_shifted}), let the optimal solution of the dual problem of the LP relaxation solution be denoted as $\boldsymbol{y}^*$ and $\hat{\boldsymbol{y}}^*$, respectively. Then, there exists a correspondence between these solutions such that $\boldsymbol{y}^*=\hat{\boldsymbol{y}}^*$.}

\begin{proof}
By incorporating the boundary constraints of variable $\boldsymbol{x}$ into the set of other constraints,  the MILP problem can be reformulated as the following LP relaxation form
\begin{align}
			\bm{x}^* = \arg \min_{\bm{x}} \{\bm{c}^{\mathrm{T}} \bm{x} |\bar{\bm{A}}\bm{x} \le \bm{b}\} \
\end{align}
We can obtain its dual problem as follows
\begin{align}
			\label{eq:ori_dual}\bm{y}^* = \arg\max_{\bm{y}} \{-\bm{y}^{\rm T}\bm{b}| \bm{c}^{\rm T} + \bm{y}^{\rm T}\bm{\bar{A}} = \bm{0}, \bm{y} \ge \bm{0} \}
\end{align}
Similarly, for AMILPs, we also combine the boundary constraints of variable $\hat{\boldsymbol{x}}^*$ with other constraints, then we have
\begin{align}
			\bm{\hat{x}}^* = \arg\min_{\hat{x}} \{\bm{c}^{\rm T} \hat{\bm{x}} - \bm{c}^{\rm T} \bm{s}|\bar{\bm{A}} \hat{\bm{x}} \le \bm{b} + \bar{\bm{A} \bm{s}}\}
\end{align}
The dual problem of this AMILP can be obtained as follows:
		\begin{equation}
			\label{eq:shifted_dual}
			\begin{aligned}
			\hat{\bm{y}}^* &= \arg\max_{\hat{\bm{y}}} \{-\hat{\bm{y}}^{\rm T}(\bm{b} +\bar{\bm{A}} \bm{s}) - \bm{c}^{\rm T}\bm{s} | \bm{c}^{\rm T} + \hat{\bm{y}}^{\rm T} \bm{\bar{A}} = \bm{0}, \hat{\bm{y}} \ge \bm{0}\} \\
			& = \arg\max_{\hat{\bm{y}}} \{-\hat{\bm{y}}^{\rm T} \bm{b} | \bm{c}^{\rm T} + \hat{\bm{y}}^{\rm T} \bm{\bar{A}} = \bm{0}, \hat{\bm{y}} \ge \bm{0}\}	
\end{aligned}		
\end{equation}
According to Eq.(\ref{eq:ori_dual}) and  Eq.(\ref{eq:shifted_dual}), it can be inferred that the optimal solutions for the dual problem associated with LP relaxation of both MILP and AMILP is identical. The proof is completed.
\end{proof}

\textbf{Theorem 3.3.} \textit{For MILPs Eq.(\ref{eq:mip}) and their corresponding AMILPs Eq.(\ref{eq:mip_shifted}), let the reduced cost corresponding to LP relaxations for a MILP  and an AMILP  be denoted as $\sigma_i$ and $\hat{\sigma}_i$, respectively. Then, there is a correspondence between these reduced costs such that $\sigma_i=\hat{\sigma}_i$.}

\begin{proof}
Let's consider the variable $x_j$ in the MILP. Its reduced cost, denoted as $\sigma_j$, can be expressed as
\begin{equation}
\sigma_j=c_j-\boldsymbol{c}_{\boldsymbol{B}}^{\mathrm{T}} \boldsymbol{B}^{-1} \hat{\boldsymbol{A}}_j
\end{equation}
Here, $\boldsymbol{B}$ represents the matrix composed of the column vectors of the current basis, $\boldsymbol{c}_{\boldsymbol{B}}$ denotes the coefficient vector of the objective function at basic variables, and $\hat{\boldsymbol{A}}_j$ represents the $j$th column of the matrix $\hat{\boldsymbol{A}}$. Similarly, for the AMILP, the reduced cost of variable $\hat{\boldsymbol{x}}$, denoted as $\hat{\sigma}_j$, can be expressed in the same way:
\begin{equation}
\hat{\sigma}_j=c_j-\boldsymbol{c}_{\boldsymbol{B}}^{\mathrm{T}} \boldsymbol{B}^{-1} \hat{\boldsymbol{A}}_j=\sigma_j
\end{equation}
This equality demonstrates that the reduced cost values for the variables in both the MILP and the AMILP are indeed equivalent. Hence, the proof is complete.
\end{proof}

\section{Methodology Details}
\subsection{Overview of Bipartite Graph Node Features}
The relationships between MILP and AMILP node features are illustrated in Table \ref{tab:cons_feat} for constraint nodes and Table \ref{tab:var_feat} for variable nodes. Note that in Table \ref{tab:var_feat}, the notation $\mathbb{B} \leftrightarrow \mathbb{Z}$ denotes the potential mutual conversion between binary variables and integer variables.

\begin{table}[ht]
\centering
\caption{Relationship between MILP and AMILP constraint node features.}
\label{tab:cons_feat}
\renewcommand{\arraystretch}{1.5}
\begin{tabular}{ccc}
\hline 
{\fontsize{9pt}{7.2pt}\selectfont Node feature of constraint $i$} & {\fontsize{9pt}{7.2pt}\selectfont MILP} & {\fontsize{9pt}{7.2pt}\selectfont  AMILP} \\
\hline 
\text { obj\_cos\_sim } & $\mathbf{C}_{i, 1}$ & $\mathbf{C}_{i, 1}$ \\
\text { bias } & $\mathbf{C}_{i, 2}$ & $\mathbf{C}_{i, 2}+\boldsymbol{a}_i^{\mathrm{T}} \boldsymbol{s}$ \\
\text { is\_tight } & $\mathbf{C}_{i, 3}$ & $\mathbf{C}_{i, 3}$ \\
\text { dualsol\_val } & $\mathbf{C}_{i, 4}$ & $\mathbf{C}_{i, 4}$ \\
\text { age } & $\mathbf{C}_{i, 5}$ & $\mathbf{C}_{i, 5}$ \\
\hline
\end{tabular}
\end{table}

\begin{table}[ht]
\centering
\caption{Relationship between MILP and AMILP variable node features.}
\label{tab:var_feat}
\renewcommand{\arraystretch}{1.5}
\begin{tabular}{ccc}
\hline 
{\fontsize{9pt}{7.2pt}\selectfont Node feature of variable $j$} & {\fontsize{9pt}{7.2pt}\selectfont MILP} & {\fontsize{9pt}{7.2pt}\selectfont  AMILP} \\
\hline 
\text { type } & $\mathbf{V}_{j, 1}$ & $\mathbf{V}_{j, 1} \hspace{0.1cm} {\rm or}  \hspace{0.1cm} \mathbb{B} \leftrightarrow \mathbb{Z}$ \\
\text { coef } & $\mathbf{V}_{j, 2}$ & $\mathbf{V}_{j, 2}$ \\
\text { has\_lb } & $\mathbf{V}_{j, 3}$ & $\mathbf{V}_{j, 3}$ \\
\text { has\_ub } & $\mathbf{V}_{j, 4}$ & $\mathbf{V}_{j, 4}$ \\
\text { sol\_is\_at\_lb } & $\mathbf{V}_{j, 5}$ & $\mathbf{V}_{j, 5}$ \\
\text { sol\_is\_at\_ub } & $\mathbf{V}_{j, 6}$ & $\mathbf{V}_{j, 6}$ \\
\text { sol\_frac } & $\mathbf{V}_{j, 7}$ & $\mathbf{V}_{j, 7}$ \\
\text { basis\_status } & $\mathbf{V}_{j, 8}$ & $\mathbf{V}_{j, 8}$ \\
\text { reduced\_cost } & $\mathbf{V}_{j, 9}$ & $\mathbf{V}_{j, 9}$ \\
\text { age } & $\mathbf{V}_{j, 10}$ & $\mathbf{V}_{j, 10}$ \\
\text { sol\_val } & $\mathbf{V}_{j, 11}$ & $\mathbf{V}_{j, 11}+s_j$ \\
\text { inc\_val } & $\mathbf{V}_{j, 12}$ & $\mathbf{V}_{j, 11}+s_j$ \\
\text { avg\_inc\_val } & $\mathbf{V}_{j, 13}$ & $\mathbf{V}_{j, 13}+s_j$ \\

\hline
\end{tabular}
\end{table}

\subsection{Contrastive Learning}
This section will provide details the forward propagation process.  Specifically, once we have acquired the updated constraint and variable node features $\mathbf{C}_{\text {ori }}^{\prime}$ and $\mathbf{V}_{\text{ori}}^{\prime}$ for the MILP, along with $\mathbf{C}_{\text {aug }}^{\prime}$ and $\mathbf{V}_{\text {aug }}^{\prime}$ for the AMILP, we can formulate the feature merging process as follows:
\begin{equation}
\footnotesize
\boldsymbol{c}_{\text {ori }}^{\mathcal{G}}=\operatorname{MLP}\left(\operatorname{Concat}\left(\operatorname{MaxPool}\left(\mathbf{C}_{\text {ori }}^{\prime}\right), \operatorname{MeanPool}\left(\mathbf{C}_{\text {ori }}^{\prime}\right)\right)\right)
\end{equation}
\begin{equation}
\footnotesize
\boldsymbol{v}_{\text {ori }}^{\mathcal{G}}=\operatorname{MLP}\left(\operatorname{Concat}\left(\operatorname{MaxPool}\left(\mathbf{V}_{\text {ori }}^{\prime}\right), \operatorname{MeanPool}\left(\mathbf{V}_{\text {ori }}^{\prime}\right)\right)\right)
\end{equation}
\begin{equation}
\footnotesize
\boldsymbol{c}_{\mathrm{aug}}^{\mathcal{G}}=\operatorname{MLP}\left(\operatorname{Concat}\left(\operatorname{MaxPool}\left(\mathbf{C}_{\mathrm{aug}}^{\prime}\right), \operatorname{MeanPool}\left(\mathbf{C}_{\mathrm{aug}}^{\prime}\right)\right)\right)
\end{equation}
\begin{equation}
\footnotesize
\boldsymbol{v}_{\mathrm{aug}}^{\mathcal{G}}=\operatorname{MLP}\left(\operatorname{Concat}\left(\operatorname{MaxPool}\left(\mathbf{V}_{\mathrm{aug}}^{\prime}\right), \operatorname{MeanPool}\left(\mathbf{V}_{\mathrm{aug}}^{\prime}\right)\right)\right)
\end{equation}
where ${\rm Concat}$ denotes the concatenation operation. The process of obtaining graph-level embeddings can be formulated as follows:
\begin{align}
    \boldsymbol{g}_{\text {ori }}=\operatorname{MLP}\left(\operatorname{Concat}\left(\boldsymbol{c}_{\text {ori }}^{\mathcal{G}}, \boldsymbol{v}_{\text {ori }}^{\mathcal{G}}\right)\right)~~~~~\boldsymbol{g}_{\text {aug }}=\operatorname{MLP}\left(\operatorname{Concat}\left(\boldsymbol{c}_{\mathrm{aug}}^{\mathcal{G}}, \boldsymbol{v}_{\text {aug }}^{\mathcal{G}}\right)\right)
\end{align}
where $\boldsymbol{g}_{\text {ori}}$ and $\boldsymbol{g}_{\text{aug}}$ represent the graph-level embeddings of the MILP bipartite graph and the AMILP bipartite graph, respectively. Once these graph-level embeddings are obtained, we proceed to apply contrastive learning in the subsequent steps.

\subsection{Imitation Learning Training Procedure}
\label{sec:apd_IL}
We employ behavior cloning \cite{DBLP:journals/neco/Pomerleau91} to train the CAMBranch, focusing on imitating Strong Branching policies. Expert strategies are collected using the optimization suite SCIP \cite{GleixnerEtal2018OO} and are stored in a dataset that consists of expert state-action pairs, denoted as $\mathcal{D}=\left(\mathbf{s}_i, \mathbf{a}_i^*\right)_{i=1}^N$.  Within this dataset, $\mathbf{s}_i=(\mathcal{G}, \mathbf{C}, \mathbf{E}, \mathbf{V})$, while $\mathbf{a}_i^*$ represents the branch decision of Strong Branching strategy under $s_t$. To optimize the network, cross-entropy is used as a supervised learning loss function
\begin{equation}
\mathcal{L}^{(\text {sup })}=-\frac{1}{N} \sum_{\left(\mathbf{s}_i, \mathbf{a}_i^*\right) \in D} \log \pi_\theta\left(\mathbf{a}_i^* \mid \mathbf{s}_i\right)
\end{equation}

In addition, to enhance the consistency in the probability distribution of variable selection between the  MILP and the AMILP, we incorporate consistency constraints. Specifically, we extract the probability distribution $\boldsymbol{P}_{\text {ori }}$ and $\boldsymbol{P}_{\text {aug }}$ of variables outputted by  MILP and the AMILP, respectively. The aim is to minimize the divergence between these two distributions, i.e.
\begin{equation}
\mathcal{L}^{(\text {Aux })}=\sum_{i=1}^{n_{\text {batch }}}\left(\boldsymbol{P}_{\text {ori }}(i)-\boldsymbol{P}_{\mathrm{aug}}(i)\right)^2
\end{equation}

Finally, the final loss function by combining the three loss above  is obtained
\begin{equation}
\mathcal{L}=\mathcal{L}^{(\text {sup})}+\lambda_1 \mathcal{L}^{(\text {infoNCE) }}+\lambda_2 \mathcal{L}^{(\text {Aux})}
\end{equation}
Where $\lambda_1$  and $\lambda_2$ are hyperparameters used to adjust the weights between loss functions.

\section{Experiment Details}
\label{sec:app_exp_detials}

\subsection{Experimental Settings}
In this paper, all experiments are run on a cluster with Intel(R) Xeon(R) Gold 5218 CPU @ 2.30GHz processors, 128GB RAM, and Nvidia RTX 2080Ti graphics cards.

\subsubsection {Data Collection and Split}
In our experiments, we conducted a systematic investigation of our model's performance by categorizing each problem instance into three difficulty levels: \textit{Easy}, \textit{Medium}, and \textit{Hard}, following the established settings used in \citet{gasse2019exact}. Additional information about the instance data split for each problem can be found in \citet{gasse2019exact}.

\subsubsection{Solver Configuration}
In our experiments, we employed the open-source solver SCIP (version 6.0.1) \citep{GleixnerEtal2018OO} as our backend solver. We imposed a maximum solving time limit of 3600 seconds, allowing cut generation operations solely at the root node while disabling solver restarts. To ensure a fair comparison among the methods, we maintained all other solver parameters at their default values \cite{DBLP:conf/aaai/KhalilBSND16,DBLP:journals/mpc/Kilinc-KarzanNS09, Branching2008nonlinear}.

\subsubsection{Training Parameters}

In our experiments, we implemented models using PyTorch \citep{DBLP:conf/nips/PaszkeGMLBCKLGA19} and PyTorch Geometric \citep{DBLP:journals/corr/abs-1903-02428}. We utilized the Adam optimizer \cite{DBLP:journals/corr/KingmaB14} with $\beta_1=0.9$ and $\beta_2=0.999$. In case the model shows no significant improvement over a period of 10 epochs, we applied a learning rate reduction to 20\% of its initial value. We set the hidden layer size of the GCNN network to 64. We conducted a grid search for the learning rate, considering values from $\{1\times 10^{-3}, 5\times 10^{-4}, 1\times 10^{-4}\}$. Additionally, we selected the weight values $\lambda_1=0.05$ and $\lambda_2=0.01$ for the loss function.

\subsubsection{Metrics} 
We evaluate performance using the shift geometric mean (SGM) for solving time and the geometric mean of nodes for the number of B\&B search tree nodes. The SGM is calculated for a set of $n$ numbers, denoted as $t_1, t_2, \ldots, t_n$, with $s$ representing the shift. The SGM formula is given by $\mathrm{SGM} = \sqrt[n]{\prod_{i=1}^n(t_i+s)} - s$. In this context, smaller values for the solving time and node count metrics indicate better performance, while a higher number of wins metric signifies superior performance. In this paper, $s$ is set to 1 for time and 100 for Nodes, following the previous work \citep{gasse2019exact, 2020Parameterizing}.

\subsection{Additional Results}


Our results are also presented in a tabular format, available in Table \ref{tab:app_IL_acc} for imitation learning accuracy, Table \ref{tab:app_abl_IL} and Table \ref{tab:abl_cambranch_setcovering_sol} for the ablation study. Additionally, we delve into the influence of varying the training sample ratio on performance. Given that our proposed CAMBranch requires additional computations compared to the original GCNN in \cite{gasse2019exact}, we also evaluated the training overhead of these models.

\begin{table*}[ht]
\centering
\caption{Imitation learning accuracy on the test sets of expert samples.}
\label{tab:app_IL_acc}
\resizebox{\textwidth}{!}{
\begin{tabular}{ccccccccccccc}
\toprule
\multirow{2}{*}{Model} & \multicolumn{3}{c}{Set Covering} & \multicolumn{3}{c}{Combinatorial Auction} & \multicolumn{3}{c}{Capacitated Facility Location} & \multicolumn{3}{c}{Maximum Independent Set} \\
\cmidrule(r){2-4} \cmidrule(lr){5-7} \cmidrule(lr){8-10} \cmidrule(l){11-13}
& acc@1 & acc@5 & acc@10 & acc@1 & acc@5 & acc@10 & acc@1 & acc@5 & acc@10 & acc@1 & acc@5 & acc@10 \\
\midrule
GCNN & 70.39 $\pm$ 0.28 & 93.09 $\pm$ 0.14 & 98.40 $\pm$ 0.09 & 68.95 $\pm$ 0.32 & 92.79 $\pm$ 0.16 & 97.87 $\pm$ 0.09 & 61.70 $\pm$ 0.23 & 95.48 $\pm$ 0.12 & 99.68 $\pm$ 0.01 & 80.70 $\pm$ 0.72 & 92.78 $\pm$ 0.19 & 95.83 $\pm$ 0.22\\
GCNN (10 \%) & 58.98 $\pm$ 0.69 & 82.97 $\pm$ 0.52 & 91.61 $\pm$ 0.44 & 64.44 $\pm$ 0.90 & 90.37 $\pm$ 0.17 & 96.70 $\pm$ 0.08 & 59.20 $\pm$ 0.21 & 95.10 $\pm$ 0.08 & 99.68 $\pm$ 0.02 & 75.36 $\pm$ 0.45 & 90.84 $\pm$ 0.39 & 94.39 $\pm$ 0.30\\
 CAMBranch & 65.27 $\pm$ 3.92 & 89.46 $\pm$ 2.98 & 96.14 $\pm$ 1.90 & 67.85 $\pm$ 0.97 & 91.52 $\pm$ 0.50 & 97.14 $\pm$ 0.39 & 60.50 $\pm$ 0.30 & 95.24 $\pm$ 0.13 & 99.64 $\pm$ 0.01 & 80.38 $\pm$ 0.11 & 92.34 $\pm$ 0.20 & 95.47 $\pm$ 0.20 \\
\bottomrule
\end{tabular}
}
\end{table*}

\begin{table*}[ht]
\centering
\caption{{Results of ablation experiments on imitation learning accuracy in Set Covering.}}
\label{tab:app_abl_IL}
\renewcommand{\arraystretch}{1.5}
\begin{tabular}{cccc}
\hline 
Type & $\operatorname{acc} @ 1$ & $\operatorname{acc} @ 5$ & $\operatorname{acc} @ 10$ \\
\hline 
CAMBranch w/o CL (10\%)  & 64.75 $\pm$ 1.43 & 88.94 $\pm$ 1.31 & 96.16 $\pm$ 0.72 \\
CAMBranch (10\%) & 65.27 $\pm$ 3.92 & 89.46 $\pm$ 2.98 & 96.14 $\pm$ 1.90 \\
\hline
\end{tabular}
\end{table*}

\begin{table*}
	\centering
	\caption{Results of ablation experiments on instance solving evaluation in Set Covering.}
	\label{tab:abl_cambranch_setcovering_sol}
	\resizebox{\textwidth}{!}{
		\begin{tabular}{cccccccccc}
			\toprule
			\multirow{3}{*}{Model} & \multicolumn{3}{c}{Easy} & \multicolumn{3}{c}{Medium} & \multicolumn{3}{c}{Hard} \\
			\cmidrule(r){2-10}
			&  Time & Wins &   Nodes & Time & Wins &  Nodes & Time & Wins  & \multicolumn{1}{c}{Nodes} \\
			\midrule
			CAMBranch w/o CL (10\%) & 6.55 & 77/100 & 202 & 66.09 & 22/100 & 2788 & 1472.84 & 23/49 & 69215  \\
			CAMBranch (10\%) & 6.79 & 23/100 & 188 & 61.00 & 78/100 & 2339 & 1427.02 & 32/55 & 66943\\
			\bottomrule                                            
		\end{tabular}
	}
\end{table*}
\subsubsection{Analysis of Impact of Training Sample Ratios}
To investigate the influence of training sample ratios, we conducted experiments using subsets comprising 5\%, 10\%, and 20\% of the training data for both GCNN and CAMBranch. We evaluated their performance on the Combinatorial Auction problem, and the results are summarized in Table \ref{tab:ratio_study_auction}. Our observations indicate that as the problem's difficulty level increases, the performance differences among the training sample ratios become progressively obvious. Moreover, for both GCNN and CAMBranch, their best performances are consistently achieved when the training sample ratio is set at 20\%, suggesting that leveraging a larger portion of data leads to performance improvement. Similar trends are evident on the Maximum Independent Set problem (Table \ref{tab:ratio_study_indset}), especially in the \textit{Hard} level instances. As shown in Table \ref{tab:ratio_study_indset}, the performance on \textit{Hard} instances improves with larger sample sizes, with reduced solving time and a lower number of nodes. Notably, CAMBranch (20\%) achieved the highest number of solved instances in the \textit{Hard} level, 49 instances in total, underscoring its superiority.  Furthermore, the comparison between the two methods reveals that CAMBranch consistently outperforms GCNN across all three ratios, highlighting the superior capabilities of our proposed CAMBranch framework.

\begin{table*}[ht]
	\centering
	\caption{The results of sample ratio analysis on the instance solving evaluation for Combinatorial Auction.}
	\label{tab:ratio_study_auction}
	\resizebox{\textwidth}{!}{
		\begin{tabular}{cccccccccc}
			\toprule
			\multirow{3}{*}{Model} & \multicolumn{3}{c}{Easy} & \multicolumn{3}{c}{Medium} & \multicolumn{3}{c}{Hard} \\
			\cmidrule(r){2-10}
			& Time    & Wins    & Nodes  & Time    & Wins    & Nodes  & Time    & Wins    &  \multicolumn{1}{c}{Nodes} \\
			\midrule
			GCNN (5\%)           &   1.61     &   37/100     &   95    &    10.66  &   7/100    &   803     &  117.90  &    5/100    &   10692    \\
	
			GCNN (10\%)             &   1.99     &    1/100    &    102  &   12.38     &    0/100    &    787   &   144.40   & 0/100      &   10031    \\
			GCNN (20\%)             &  1.66      &  15/100      &   94   &   9.81  &    51/100    &   796  &  107.02    & 27/100       &  9626     \\
			\midrule
			CAMBranch (5\%)                &   1.61     &   38/100     &  97   &  10.64     &   12/100    &  825  &  121.81     &   2/100    &   10582   \\
			CAMBranch (10\%) & 2.03 &  0/100 & 91 &  12.68 & 0/100 & 758 & 131.79 & 2/100 & 9074\\
			
			CAMBranch (20\%) & 1.68 & 9/100 & 91 & 9.92 & 30/100 & 762 & 103.38 & 64/100 & 9050 \\
			\bottomrule                                            
		\end{tabular}
	}
\end{table*}

\begin{table*}[ht]
	\centering
	\caption{The results of sample ratio analysis on the instance solving evaluation for Maximum Independent Set.}
	\label{tab:ratio_study_indset}
	\resizebox{\textwidth}{!}{
		\begin{tabular}{cccccccccc}
			\toprule
			\multirow{3}{*}{Model} & \multicolumn{3}{c}{Easy} & \multicolumn{3}{c}{Medium} & \multicolumn{3}{c}{Hard} \\
			\cmidrule(r){2-10}
			& Time    & Wins    & Nodes  & Time    & Wins    & Nodes  & Time    & Wins    &  \multicolumn{1}{c}{Nodes} \\
			\midrule
			GCNN (5\%)           &   5.97     &   9/100     &  90   &  64.07  &   13/100    &   1824     & 607.48  &    2/39    &   16850   \\
	
			GCNN (10\%)             & 7.18     &    0/100    &    103  &  51.40    &    37/100    &   1331   &   695.96    & 0/20      &   17034    \\
			GCNN (20\%)             &  5.84       &  22/100      &   88  &  55.22 &    26/98    &   1534  &  465.00    & 16/43     &  12998   \\
			\midrule
			CAMBranch (5\%)                &   5.85     &   65/100     &  92   &  70.91     &   3/100    &  1982  & 592.81    &   1/41   &   15480   \\
			CAMBranch (10\%) & 6.92 & 0/100 & 90 & 61.51 & 14/100 & 1479 & 496.86 & 1/40 & 10828\\
			
			CAMBranch (20\%) & 6.19 & 4/100 & 95 & 68.47 & 7/100 & 2008 & 416.24 & 20/49 & 10455 \\
			\bottomrule                                            
		\end{tabular}
	}
\end{table*}

\subsubsection{Analysis of Training Overhead}

\begin{table}[ht]
    \centering
    \caption{Results of training time per batch of GCNN and CAMBranch on the Set Covering problem. Each batch contains 64 samples.}
    \begin{tabular}{c|c}
        \toprule
       Model  & Training time per batch (s) \\
       \midrule
        CAMBranch (10\%)	 &  0.09666\\
        GCNN (10\%) & 0.09216\\
        \bottomrule
    \end{tabular}
    \label{tab:training_overhead}
\end{table}

Since our proposed CAMBranch introduces additional computation, we further assessed the training overhead of CAMBranch and GCNN proposed in \cite{gasse2019exact}. Table \ref{tab:training_overhead} demonstrates that, despite the introduced computation by CAMBranch, the observed difference in computational speed is deemed acceptable—merely a matter of several milliseconds. In summary, the training overhead associated with CAMBranch is within acceptable bounds.

\section{Discussion}
\subsection{When to Use CAMBranch?}
\begin{table*}[ht]
	\centering
	\caption{Results for the impacts of expert sample size and MILP instance count.}
	\label{tab:when_use_cambranch}
	\resizebox{!}{!}{
		\begin{tabular}{cccccccc}
			\toprule
	\multirow{3}{*}{Exp ID}	&	\multirow{3}{*}{Training Data} & \multicolumn{2}{c}{Easy} & \multicolumn{2}{c}{Medium} & \multicolumn{2}{c}{Hard} \\
			\cmidrule(r){3-8}
			& & Time      & Nodes  & Time       & Nodes  & Time      &  \multicolumn{1}{c}{Nodes} \\
			\midrule
		I	& \thead{362 instances \\ 10k expert samples}          &   2.03	&91	&12.68	&758	&131.79	&9074    \\
		II	& \thead{388 instances \\ 20k expert samples}          &   	1.68	&91	&9.92	&762	&103.38&	9050    \\
		III	& \thead{100 instances \\ 10k expert samples}          &   1.67	&94	&10.02&	757	&111.52	&9242    \\
  		IV	& \thead{100 instances \\ 20k expert samples}          &   1.64	&90	&10.03	&751&	109.58&	9195    \\
		V	& \thead{50 instances \\ 10k expert samples}          &   3.25&	98&	20.58&	796&	158.24&	9834   \\
		VI	& \thead{50 instances \\ 20k expert samples}          &   3.18&	97	&20.59	&819	&166.02	&10892   \\
			\bottomrule                                            
		\end{tabular}
	}
\end{table*}
In this section, we further explore the scenarios in which CAMBranch is most effective. To this end, we conducted the following experiments on Combinatorial Auction Problem to analyze the relationships between performance and both the number of MILP instances and expert samples. The results of these experiments are detailed in Table \ref{tab:when_use_cambranch}.

\paragraph{Impact of expert sample size.} From experiment pairs (I, II), (III, IV), and (V, VI), we delve into the relationship between performance and the number of expert samples, holding the number of MILP instances constant. The results indicate that increasing the number of expert samples generally enhances performance, as evident in pairs (I, II) and (III, IV). However, in scenarios with a relatively small number of instances, such as (V, VI), augmenting expert sample size may not necessarily lead to a performance increase.

\paragraph{Influence of MILP instance count.} Examining pairs (I, III, IV) and (II, IV, VI), we further explore the relationship between performance and the number of MILP instances, maintaining a consistent expert sample size. The observations suggest a general trend of performance improvement with an increase in the number of MILP instances.

Based on these observations, to answer the question of when to use CAMBranch, our initial conclusion is that CAMBranch demonstrates more potential when there is a sufficient number of MILP instances, but not necessarily too large scale. Using several hundreds of MILP instances can already achieve superior performance. Moreover, increasing the number of expert samples provides benefits for performance, as supported by our empirical findings.

\subsection{Relationship between Imitation Learning and Instance Evaluation}
In Section \ref{subsec:exp_res}, we presented the results of various branching strategies. However, due to space limitations, we aim to delve deeper into these results in this discussion section. Specifically, we'll focus on the relationship between the results of imitation learning accuracy and instance evaluation.

Regarding imitation learning accuracy, it's notable that CAMBranch doesn't consistently outperform GCNN in imitating the Strong Branching strategy. However, when we shift our attention to instance evaluation, CAMBranch exhibits superior performance in most cases, particularly on challenging problems, with the exception of Set Covering. This intriguing observation prompts us to explore the underlying reasons. Analyzing CAMBranch's training signal, i.e., loss function, one possible explanation is that contrastive learning empowers CAMBranch to transcend mere imitation of Strong Branching. While Strong Branching often results in the creation of the smallest B\&B trees due to its tendency to produce high-quality branching decisions, it doesn't guarantee an optimal branching strategy. Recent studies by \citet{DBLP:conf/nips/ScavuzzoCCG0YA22}, \citet{dey2023theoretical}, and \citet{DBLP:journals/ejco/GamrathBS20} have even suggested that Strong Branching may underperform in certain cases, falling short of problem-specific rules.

Within our CAMBranch framework, the introduction of contrastive learning offers a key advantage. It enables CAMBranch to learn MILP representations better, allowing it to uncover alternative branching strategies beyond Strong Branching. This means that CAMBranch can now explore and learn from alternative policies that might yield better performance than Strong Branching, guided by the contrastive learning loss function. Essentially, our proposed framework serves as a pathway to learning potentially superior branching strategies, extending beyond the confines of merely imitating Strong Branching. Therefore, this will enhance decision-making capabilities in the realm of MILP solving. We will delve deeper into this aspect in our future research.

\section{Related Work}
\paragraph{Learning to branch.} The integration of machine learning techniques into branching strategies has been a topic of growing interest, with \citet{gasse2019exact} marking a significant breakthrough in this domain, signifying a pivotal moment. Prior to this, conventional approaches in relevant literature \citep{DBLP:conf/aaai/KhalilBSND16, DBLP:journals/informs/AlvarezLW17, DBLP:conf/icml/BalcanDSV18} often rely on extracting statistical features from MILPs during the solving process. However, these feature extraction methods were deemed incomplete for capturing the full essence of MILP problem instances. By contrast, the work of \citet{gasse2019exact} introduced a novel perspective by leveraging bipartite graphs to create more accurate modeling of MILPs and trained models with imitation learning to approximate the Strong Branching. Subsequent research efforts predominantly build upon the foundations laid by \citet{gasse2019exact}. For instance, \citet{DBLP:conf/nips/GuptaGKM0B20} proposed a hybrid model tailored for efficient branching on CPU-constrained machines. This model replaced computationally expensive graph networks with more lightweight Multilayer Perceptrons (MLPs), except at the root node. Meanwhile, \citet{DBLP:journals/corr/abs-2012-13349} achieved notable enhancements in both runtime performance and the average primal-dual gap by introducing Neural Diving and Neural Branching techniques.

While these methods demonstrated impressive results, they were primarily designed for homogeneous MILPs, where training and test instances belong to the same problem class. In order to extend the applicability to heterogeneous MILPs, where problem instances vary across different classes, \citet{2020Parameterizing} and \citet{DBLP:journals/kbs/LinZWZ22} adopted approaches that parameterize the states of B\&B search trees. They achieved this by imitating the SCIP default branching scheme known as \texttt{RPB}, a suitable expert policy renowned for its effectiveness in guiding search trees. For more comprehensive insights into this field, readers can refer to related surveys by \citet{DBLP:conf/nips/KhalilDZDS17} and \citet{DBLP:journals/ijon/ZhangLLZYLY23}.

\paragraph{Contrastive learning.} Contrastive learning has emerged as a powerful paradigm in various machine learning domains. It has been successfully applied in computer vision \citep{DBLP:conf/icml/ChenK0H20, DBLP:journals/corr/abs-2009-00104, DBLP:conf/cvpr/He0WXG20, DBLP:journals/corr/abs-2003-04297, DBLP:conf/nips/ChenKSNH20, DBLP:conf/nips/CaronMMGBJ20, DBLP:conf/nips/GrillSATRBDPGAP20, DBLP:conf/cvpr/ChenH21}, natural language processing \citep{DBLP:conf/nips/MikolovSCCD13, arora2019theoretical, DBLP:conf/acl/IterGLJ20, DBLP:journals/corr/abs-2005-12766, DBLP:conf/acl/GiorgiNWB20}, recommendation system \citep{DBLP:conf/sigir/YuY00CN22} and drug discovery \citep{xu2022pisces}. The core idea involves encouraging similarity between positive pairs (similar data points) and pushing apart negative pairs (dissimilar data points) in a latent space. In this paper, CAMBranch leverages the contrastive learning paradigm to maximize the utilization of both MILPs and their corresponding AMILP data.
\section{MILP Formulation}

Following \citet{gasse2019exact}, we evaluated the models on four combinatorial optimization problems, i.e., Set Covering, Combinatorial Auction, Capacitated Facility Location, and Maximum Independent Set represent. These problem classes have served as standard benchmarks for evaluating the efficacy of optimization techniques in the research community. Despite considerable attention and research efforts, these problems remain computationally demanding, even for state-of-the-art solvers. In the following sections, we provide detailed descriptions of the MILP models for each of these problems.

(1) Set Covering: given a finite set $\mathcal{U}$ and its $n$ subsets $S_1, \cdots, S_n$, the problem seeks to identify the minimum number of subsets that can be used to cover $\mathcal{U}$ completely. 

\begin{equation}
\text { minimize }  \quad \quad \sum_{j=1}^n x_j
\end{equation}

\begin{equation}
\text { subject to }    \sum_{j \in\{1, \cdots, n\} \mid v \in S_j} x_j \geqslant 1  \quad \forall v \in \mathcal{U}
\end{equation}

\begin{equation}
x_j \in\{0,1\} \quad \forall j \in\{1, \cdots, n\}
\end{equation}
where each $x_j$ is a decision variable. If the subset $\mathcal{S}_j$ is chosen, then $x_j=1$; otherwise $x_j=0$.

(2) Combinatorial Auction: 
consider a scenario where there are  $n$ bids and $m$ available. Each item is associated with a subset  $S_j \subseteq\{1, \cdots n\}$ representing the bidders interested in that particular item. The revenue generated by each bid $i$  is denoted as  $b_i$.  The Combinatorial Auction problem aims to allocate the bids to maximize the expected return.

\begin{equation}
\text {maximize} \quad \sum_{i=1}^n x_i b_i
\end{equation}

\begin{equation}
\text { subject to } \quad \sum_{i \in S_j} x_i \leqslant 1  \quad \forall j \in\{1, \cdots, m\}
\end{equation}

\begin{equation}
x_i \in\{0,1\} \quad \forall i \in\{1, \cdots, n\}
\end{equation}
where each $x_i$ is a decision variable. If bid $x_i$ is selected, then  $x_i=1$, otherwise $x_i=0$.

(3) Capacitated Facility Location: assuming there are $m$ facilities and $n$ customers, the goal is to satisfy customers at a minimum cost. Let $f_i$ denote the cost of building facility $i \in\{0, \cdots, m\}$ and $c_{ij}$ denote the transportation cost of products from facility $i$ to customer $j \in\{0, \cdots, n\}$. The demand of customer $j$ is $d_j>0$, and the capacity of facility $i$ is $u_i>0$.

\begin{equation}
\text {minimize} \quad \sum_{i=1}^m \sum_{j=1}^n c_{i j} y_{i j}+\sum_{i=1}^m f_i x_i
\end{equation}
\begin{equation}
\text { subject to } \quad \sum_{i=1}^m y_{i j}=1   \quad \forall j \in\{1, \cdots, n\}
\end{equation}

\begin{equation}
\begin{aligned}
\sum_{j=1}^n d_i y_{i j} \leqslant u_i x_i  \quad  \forall i \in\{1, \cdots, m\} 
\end{aligned}
\end{equation}

\begin{equation}
\begin{aligned}
 y_{i j} \geqslant 0 \quad \forall i \in\{1, \cdots, m\},  \quad j \in\{1, \cdots, n\} 
\end{aligned}
\end{equation}

 \begin{equation}
\begin{aligned}
 x_j \in\{0,1\} \quad  \forall j \in\{1, \cdots, n\}
\end{aligned}
\end{equation}
where each Boolean variable $x_j$ and each continuous variable $y_{ij}$ are decision variables. If facility $j$ is built, $x_j = 1$, otherwise $x_j = 0$. In addition, variable $y_{ij}$ represents the percentage of demand $d_j$ that is assigned to facility $i$.

(4) Maximum Independent Set: consider an undirected graph $\mathcal{G}=(\mathcal{V}, \mathcal{E})$, 
wherein a subset $S \in \mathcal{V}$ of nodes is deemed an independent set when no edges interconnect any couple of nodes within $\mathcal{S}$.
The task at hand is to determine the largest possible number of independent sets in the graph $\mathcal{G}$.
\begin{equation}
\text {maximize} \quad \sum_{v \in \mathcal{V}} x_v
\end{equation}

\begin{equation}
\text { subject to } \quad x_u+x_v \leqslant 1 \quad \forall(u, v) \in \mathcal{E}
\end{equation}

\begin{equation}
x_v \in\{0,1\}   \quad \forall v \in \mathcal{V}
\end{equation}
where each Boolean variable $x_v$ is a decision variable. If node $v \in \mathcal{V}$ is selected in the independent set, then $x_v=1$,  otherwise, $x_v=0$.
\end{document}